\documentclass{article}

     \usepackage[final]{neurips_2025}

\usepackage[utf8]{inputenc} 
\usepackage[T1]{fontenc}    
\usepackage{hyperref}       
\usepackage{url}            
\usepackage{booktabs}       
\usepackage{amsfonts}       
\usepackage{nicefrac}       
\usepackage{microtype}      
\usepackage{xcolor}         
\usepackage[table]{xcolor}  
\usepackage{multirow}

\usepackage{amsmath}
\usepackage{amssymb}
\usepackage{mathtools}
\usepackage{amsthm}

\usepackage{mdframed}
\usepackage{xcolor} 
\usepackage[most]{tcolorbox} 

\definecolor{lightgreen}{RGB}{220, 255, 220}

\newmdenv[
  backgroundcolor=green!5,
  linecolor=green!60!black,
  linewidth=0.5pt,
  roundcorner=2pt,
  skipabove=5pt,
  skipbelow=5pt,
  innertopmargin=6pt,
  innerbottommargin=6pt,
  innerleftmargin=6pt,
  innerrightmargin=6pt
]{greenbox}

\newmdenv[
  backgroundcolor=orange!10,
  linecolor=orange!80!black,
  linewidth=0.5pt,
  roundcorner=2pt,
  skipabove=5pt,
  skipbelow=5pt,
  innertopmargin=6pt,
  innerbottommargin=6pt,
  innerleftmargin=6pt,
  innerrightmargin=6pt
]{orangebox}

\newmdenv[
  backgroundcolor=blue!10,
  linecolor=blue!80!black,
  linewidth=0.5pt,
  roundcorner=2pt,
  skipabove=5pt,
  skipbelow=5pt,
  innertopmargin=6pt,
  innerbottommargin=6pt,
  innerleftmargin=6pt,
  innerrightmargin=6pt
]{bluebox}

\usepackage[capitalize,noabbrev]{cleveref}

\theoremstyle{plain}
\newtheorem{theorem}{Theorem}[section]

\newtheorem{lemma}[theorem]{Lemma}

\theoremstyle{definition}
\newtheorem{definition}[theorem]{Definition}

\newtheorem{remark}[theorem]{Remark}

\newcommand\R{\mathbb{R}}

\title{Spectral Conditioning of Attention Improves Transformer Performance}

\author{%
Hemanth Saratchandran \\
Australian Institute for Machine Learning\\
Adelaide University\\
  \texttt{hemanth.saratchandran@adelaide.edu.au} \\
 \And
Simon Lucey \\
Australian Institute for Machine Learning \\
Adelaide University \\
 \texttt{simon.lucey@adelaide.edu.au} \\
}

\begin{document}

\maketitle

\begin{abstract}
We present a theoretical analysis of the Jacobian of a attention block within a transformer, showing that it is governed by the query, key, and value projections that define the attention mechanism. Leveraging this insight, we introduce a method that systematically alters the spectral properties of each attention layer to reduce the Jacobian’s condition number, thereby improving the overall conditioning of the attention layers within a transformer network. We empirically show that this improved Jacobian conditioning translates to enhanced performance in practice. Our approach is simple, broadly applicable, and can be easily integrated as a drop-in replacement for a wide range of existing attention mechanisms. We validate its effectiveness across diverse transformer architectures and tasks, demonstrating consistent improvements in performance.
\end{abstract}

\section{Introduction}
Since its introduction in \cite{vaswani2017attention}, the modern transformer has emerged as one of the most influential architectures in modern machine learning, yielding advances across a wide range of domains, including natural language processing (NLP) \cite{vaswani2017attention, zhuang2021robustly, zhen2022cosformer}, computer vision \cite{dosovitskiy2020image, liu2021swin, touvron2021training, carion2020end}, and robotics \cite{salzmann2020trajectron++, maiti2023transfusion}. At the heart of its success lies the attention mechanism, which enables the model to capture complex relationships by computing pairwise interactions between all input tokens. By dynamically assigning importance to different tokens, attention allows transformers to model global dependencies, making them a foundational architecture in deep learning.

In this work, we investigate the conditioning of the Jacobian associated with the attention layer in transformer models. The conditioning of a matrix is measured by its condition number, the ratio of its largest to smallest singular value, with high values indicating ill-conditioning. Poorly conditioned Jacobians can hinder performance for gradient-based optimizers \cite{nocedal1999numerical}. Recent advances in feedforward neural networks have shown that improving Jacobian conditioning can lead to better optimization and generalization performance. For instance, \cite{saratchandran2025weight} proposed a weight normalization method to address ill-conditioning, while \cite{liu2022loss} demonstrated that reducing the condition number of the neural tangent kernel (NTK) leads to better convergence. Despite these insights, the conditioning of Jacobians in transformer models, particularly within attention layers remains largely unexamined. Addressing this gap is the central focus of our work.

We present a theoretical framework showing that the conditioning of the Jacobian in a attention layer is influenced by the conditioning of the queries, keys, and values matrices that constitute the attention matrix within that layer. Building on this insight, we introduce a method that modifies the spectrum of these matrices by adding a carefully designed correction term, yielding a new attention block that we call \textit{spectral conditioned attention}, resulting in significantly improved Jacobian conditioning. Our analysis shows that the optimal correction can be derived using the singular value decomposition (SVD) of the query, key, and value matrices. However, computing the SVD directly can be prohibitively expensive in large-scale models. To address this, we also propose a computationally efficient approximation that achieves comparable improvements in conditioning with substantially lower overhead. 

While a full theoretical proof linking our methodology to improved convergence in gradient-based optimization would require analyzing the NTK of a transformer, an extremely difficult task, we provide strong empirical evidence that our approach improves transformer performance across a wide range of applications, including image classification, object detection, instance segmentation, and natural language processing. A key advantage of our approach is its compatibility with a broad class of different attention mechanisms \cite{ali2021xcit, liu2021swin, touvron2021training, yuan2022volo, ding2022davit, xiong2021nystromformer}, which we confirm empirically.  Our main contributions include:
\begin{itemize}
    \item[1.] A theoretical framework that analyses the condition number of the Jacobian of attention showing its dependence on the condition number of the query, key and value matrices comprising an attention layer.
    \item[2.] The introduction of spectral conditioned attention that adds a correction term to each of the query, key and value matrices resulting in a better conditioned attention Jacobian.  
\end{itemize}

We validate the above contributions across a range of transformer architectures and applications, including image classification, object detection, instance segmentation, and natural language processing, consistently demonstrating improved performance in each setting.

\section{Related Work}

\paragraph{Conditioning.}
Several recent works have highlighted the critical role of conditioning in the optimization and generalization of neural networks. In \cite{saratchandran2025weight}, the authors focus on the condition numbers of weight matrices in feedforward architectures, showing that well-conditioned weights correlate with improved performance across tasks. They propose a preconditioning strategy that multiplies each weight matrix by a carefully constructed transformation, effectively reducing its condition number and facilitating more stable training. In a complementary line of work, \cite{liu2022loss} investigates neural network optimization from the perspective of the neural tangent kernel (NTK). The authors  demonstrate that improving the conditioning of the NTK leads to better convergence, especially in the infinite-width regime where the NTK governs training dynamics \cite{jacot2018neural}. In \cite{macdonald2023skip}, it is shown that normalization improves the conditioning of various neural network architectures. Similarly, \cite{zhai2023stabilizing} demonstrate that normalizing attention weights enhances training convergence, which can be interpreted as an alternative means of improving the conditioning of attention layers. In a different approach, \cite{agarwal2021deep} shows that simply increasing the depth of feedforward networks can enhance their conditioning, thereby improving the effectiveness of gradient-based optimization methods. Recent work on transformers has examined conditioning from multiple perspectives, including tokenization, activation functions, and optimization \citep{saratchandran2025enhancing, ji2025always, saratchandran2024rethinking, ji2025cutting, saratchandran2025leaner, zheng2025structured}, as well as within the context of fine-tuning \citep{ji2024sine, albert2025randlora, albert2025towards, gordon2025compressing}.


\paragraph{Attention.} A range of methods have been proposed to improve the efficiency and scalability of Transformer architectures, often by rethinking the design of attention mechanisms or reducing computational complexity. For example, the Data-Efficient Image Transformer (DeiT) \cite{touvron2021training} introduces distillation tokens to achieve competitive performance in vision tasks without relying on large-scale datasets. The Cross-Covariance Image Transformer (XCiT) \cite{ali2021xcit} reformulates attention by leveraging cross-covariances of spatial features, enabling efficient spatial interactions with reduced overhead. Swin Transformer \cite{liu2021swin} introduces a hierarchical architecture and a novel shifted window-based attention mechanism, significantly improving efficiency and effectiveness in vision tasks. DaViT \cite{ding2022davit} extends traditional vision transformers by incorporating both spatial and channel attention mechanisms, enhancing feature extraction across multiple dimensions. The Nystr\"{o}mformer \cite{xiong2021nystromformer} takes a different approach by approximating full self-attention using the Nystr\"{o}m method, significantly lowering the quadratic complexity to near-linear while maintaining the core properties of attention, making it especially effective for long-sequence modeling. 

\section{Methodology}\label{sec:theory}

\subsection{Preliminaries}\label{subsec:prelims}
In this section, we define the transformer architecture by describing the structure of a transformer layer, along with establishing notation for various mathematical quantities. 

A transformer layer can be represented as a mapping 
\begin{equation}
  \mathbf{T}: \mathbb{R}^{N \times D} \rightarrow \mathbb{R}^{N \times D}  
\end{equation}
which is formally defined by 
\begin{equation}\label{eqn:trans_main}
    \mathbf{T}(X) = \mathbf{F}(\mathbf{A}(X) + X),
\end{equation}
where \( \mathbf{F} \) denotes a feed forward neural network with a residual connection, and \( \mathbf{A} \) represents an attention mechanism. In general, layer normalization is added however for simplicity we omit layer normalization for this discussion.

For the theoretical framework we will primarily focus on self-attention, which is one of the most common forms of attention. Self-attention is composed of three learnable matrices, query $W_Q \in \R^{D\times d}$, key $W_K \in \R^{D \times d}$, and value 
$W_V \in \R^{D\times d}$, defined for an input sequence $X \in \mathbb{R}^{N \times D}$ by
\begin{equation}\label{eqn:attn_eqn_general}
    \mathbf{A}(X) = \mathbf{softmax}(XW_QW_K^TX^T)XW_V
\end{equation}
where $\mathbf{softmax}$ is the softmax activation that acts row-wise on a matrix \cite{prince2023understanding}. Note that then $A(X) \in \R^{N\times d}$. 
When training a transformer the self-attention parameters are given by the weight matrices $W_Q$, $W_K$ and $W_V$. For further details on transformers readers may consult \cite{prince2023understanding}.


The self-attention map of a layer in a transformer $\mathbf{A}(X)$ has parameters given by those parameters in $X$ from the previous layer and those given by $W_Q$, $W_K$ and $W_V$ that define $\mathbf{A}(X)$. Our work will consider the Jacobian of $\mathbf{A}(X)$ with respect to the parameters within the layer of $\mathbf{A}(X)$, namely $W_Q$, $W_K$ and $W_V$. Therefore, when we speak of the Jacobian of $\mathbf{A}(X)$ it will be with respect to $W_Q$, $W_K$, $W_V$. We will denote this Jacobian by 
$J(\mathbf{A}(X))$ and note that it is defined by
\begin{equation}\label{eqn:jac_defn}
 J(\mathbf{A}(X)) =  
 \bigg{[}\frac{\partial{\mathbf{A}(X)}}{\partial{W_Q}}, 
 \frac{\partial{\mathbf{A}(X)}}{\partial{W_K}}, 
 \frac{\partial{\mathbf{A}(X)}}{\partial{W_V}}\bigg{]}^T  
\end{equation}

Given a matrix $W \in \R^{m\times n}$ we denote the vectorization of $W$ by $\mathbf{vec}(W) \in \R^{mn\times 1}$ \cite{magnus2019matrix}. Note that for such a matrix
there is a transformation $T_{mn} \in \R^{mn \times mn}$ such that
$T_{mn}\mathbf{vec}(W) = \mathbf{vec}(W^T)$ where $W^T$ denotes the transpose of $W$. The matrix $T_{mn}$ is known as a commutation matrix and is a permutation matrix \cite{magnus2019matrix}. The maximum singular value of a matrix $W$ will be denoted by $\sigma_{\max}(W)$ and the minimum singular value by $\sigma_{\min}(W)$. We will use the standard terminology SVD to denote the singular value decomposition of a matrix. Given a vector $z \in \R^n$ the notation $||z||_2$ will denote the vector 2-norm of $z$.

\subsection{Main Theorems}\label{subsec:main_theorems}
In this section, we analyze the conditioning of the Jacobian of the self-attention matrix in a transformer. We will show that the condition number of the Jacobian depends on the condition number of the queries, keys and values matrices. This then motivates our key insight: reducing the condition number of the queries, keys and values matrices can lead to a lower condition number for the Jacobian of the self-attention matrix in each layer of a transformer, which we empirically verify in \cref{sec:exps}. We have chosen to focus our theory on self-attention so as to yield concrete formulas. However, our theory goes through for more general attention mechanisms, see \cref{app:theory} for a discussion.
Proofs of all lemmas and theorems in this section can be found in \cref{app:theory}.

\begin{definition}
Let $A$ be an $N \times d$ matrix of full rank. The condition number of $A$, denoted by $\kappa$, is defined as
\begin{equation}
    \kappa(A) = \frac{\sigma_{\max}(A)}{\sigma_{\min}(A)}
\end{equation}
where $\sigma_{\max}(A)$ denotes the maximum singular value of $A$ and $\sigma_{\min}(A)$ the minimum singular value of $A$, which we know is non-zero as $A$ is full rank.
\end{definition}

Our objective is to analyze the condition number of the self-attention block within a transformer. We demonstrate that the condition number of its Jacobian is influenced by the condition numbers of the query, key, and value weight matrices. By appropriately conditioning these matrices, we show that the Jacobian’s condition number can be significantly reduced, leading to a more stable and well-conditioned attention mechanism suitable for applications.

To establish this, we first examine the derivatives of the self-attention block with respect to the parameters $W_Q$, $W_K$, and $W_V$, yielding \cref{thm:attn_derivs} which is then used to establish a condition number bound on the Jacobian given in \cref{thm:cond_jac}. As a starting point, \cref{lem:softmax_deriv} presents the derivative of the $\mathbf{softmax}$ function, which plays a central role in the analysis.



\begin{lemma}\label{lem:softmax_deriv}
Let $\Lambda : \R^n \rightarrow \R^{n\times n}$ denote the function 
$\Lambda(z) = Diag(z) - z\cdot z^T$.
We then have that 
\begin{equation}
\frac{\partial \mathbf{softmax}}{\partial{x}}(z) = 
\Lambda(\mathbf{softmax}(z)).
\end{equation}
\end{lemma}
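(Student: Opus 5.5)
The plan is to compute the Jacobian of $\mathbf{softmax}:\R^n\to\R^n$ entrywise and then recognise the result as $\Lambda$ evaluated at $\mathbf{softmax}(z)$. Write $s = \mathbf{softmax}(z)$, so that $s_i = e^{z_i}/S$ with $S = \sum_{k=1}^n e^{z_k}$. The statement asserts that the $(i,j)$ entry of the Jacobian, namely $\partial s_i/\partial z_j$, equals $s_i\delta_{ij} - s_i s_j$. This is exactly the $(i,j)$ entry of $Diag(s) - s\, s^T$.

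First, I differentiate $s_i = e^{z_i} S^{-1}$ with respect to $z_j$ using the product rule. The numerator contributes $\delta_{ij} e^{z_i} S^{-1}$. The denominator contributes $-e^{z_i} S^{-2}\,\partial S/\partial z_j$. Since $\partial S/\partial z_j = e^{z_j}$, the second term is $-(e^{z_i}/S)(e^{z_j}/S) = -s_i s_j$.

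Next, I rewrite the first term as $\delta_{ij} s_i$. Collecting the $n^2$ entries into a matrix then gives $\frac{\partial\,\mathbf{softmax}}{\partial z}(z) = Diag(s) - s s^T = \Lambda(\mathbf{softmax}(z))$, which is the claim.

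No step here is a real obstacle. The only care needed is conventions. The paper writes $\partial/\partial x$, which I will read as the derivative with respect to the argument $z$. The Jacobian should be laid out with row index $i$ for the output and column index $j$ for the input. The matrix $Diag(s) - ss^T$ is symmetric, so this layout choice does not change the answer.

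It may also help to record, for later use in \cref{thm:attn_derivs}, the case where $\mathbf{softmax}$ acts row-wise on a matrix. Vectorising, the derivative is then block diagonal, with the blocks $\Lambda$ applied to each row.
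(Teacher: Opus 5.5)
Your argument is correct and is the same entrywise computation the paper uses, $\partial s_i/\partial z_j = s_i(\delta_{ij}-s_j)$ assembled into $Diag(s)-ss^T$; you also state the final form correctly as $\Lambda(\mathbf{softmax}(z))$, whereas the paper's last display literally reads $Diag(z)-z\cdot z^T$, which is a typo. One small caveat on your closing aside: with the paper's column-stacking $\mathbf{vec}$, the Jacobian of row-wise softmax is block diagonal with blocks $\Lambda(s_i)$ only after conjugating by a commutation (permutation) matrix, i.e.\ in row-major ordering, not literally as written.
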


\begin{greenbox}
\begin{theorem}\label{thm:attn_derivs}
Let $\mathbf{A}(X)$ denote a self-attention matrix with input $X$ as defined by \cref{eqn:attn_eqn_general}. Then
\begin{align}
 \frac{\partial{\mathbf{A}(X)}}{\partial{W_Q}} &= 
 (W_V^TX^T\otimes I_{N})\bigg{(} 
 \Lambda(\mathbf{softmax}(XW_QW_K^TX^T))
 \bigg{)}(XW_K\otimes X) \\
\frac{\partial{\mathbf{A}(X)}}{\partial{W_K}} &= 
(W_V^TX^T\otimes I_N)\bigg{(} 
 \Lambda(\mathbf{softmax}(XW_QW_K^TX^T))
 \bigg{)}(X\otimes XW_Q)\cdot T_{Dd} \\
\frac{\partial{\mathbf{A}(X)}}{\partial{W_V}} &= 
I_d\otimes \mathbf{softmax}(XW_QW_K^TX^T)X
\end{align}
where $\Lambda$ is defined in \cref{lem:softmax_deriv} and $T_{Dd}$ is the commutation matrix satisfying 
$T_{Dd}\mathbf{vec}(W_K) = \mathbf{vec}(W_K^T)$ (see \cref{subsec:prelims}).
\end{theorem}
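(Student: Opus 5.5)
We will prove the theorem by differentiating the vectorized map $\mathbf{vec}(\mathbf{A}(X))$ with the matrix chain rule and the standard identity $\mathbf{vec}(ABC) = (C^T\otimes A)\,\mathbf{vec}(B)$ from \cite{magnus2019matrix}. Write $M = XW_QW_K^TX^T \in \R^{N\times N}$ and $P = \mathbf{softmax}(M)$, so that $\mathbf{A}(X) = P\,XW_V$.

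\textbf{Step 1: the value derivative.} Here $P$ does not depend on $W_V$. Applying the vec identity to $\mathbf{A}(X) = (PX)\,W_V\,I_d$ gives
\[
\mathbf{vec}(\mathbf{A}(X)) = (I_d\otimes PX)\,\mathbf{vec}(W_V).
\]
This map is linear in $\mathbf{vec}(W_V)$, so its Jacobian is $I_d\otimes \mathbf{softmax}(XW_QW_K^TX^T)X$, which is the third formula.

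\textbf{Step 2: the outer factor for $W_Q$ and $W_K$.} We use the chain rule $\mathbf{vec}(\mathbf{A}) \leftarrow \mathbf{vec}(P) \leftarrow \mathbf{vec}(M) \leftarrow \mathbf{vec}(W_\bullet)$. Since $\mathbf{A} = I_N\,P\,(XW_V)$, we get
\[
\mathbf{vec}(\mathbf{A}) = \big((XW_V)^T\otimes I_N\big)\,\mathbf{vec}(P) = (W_V^TX^T\otimes I_N)\,\mathbf{vec}(P).
\]
This accounts for the leftmost factor in both formulas.

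\textbf{Step 3: the middle factor.} Next we need $\partial\,\mathbf{vec}(P)/\partial\,\mathbf{vec}(M)$. Because softmax acts row-wise, \cref{lem:softmax_deriv} gives the Jacobian of each row as $\Lambda(\text{row of } P)$. Assembled in the vec ordering, this is a (permuted) block-diagonal matrix of these blocks.

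We expect this to be the main obstacle. The paper's notation $\Lambda(\mathbf{softmax}(XW_QW_K^TX^T))$ applied to a matrix argument must be read as exactly this block-structured operator. The column-major vec interleaves rows, so a commutation matrix appears: the operator is $T^T\,\mathrm{blockdiag}_i\,\Lambda(P_{i,:})\,T$. We will adopt the convention that $\Lambda$ of a matrix denotes this $N^2\times N^2$ operator, which absorbs the permutations, and state the convention explicitly.

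\textbf{Step 4: the inner factors.} For $W_Q$, write $M = X\,W_Q\,(W_K^TX^T)$, so
\[
\mathbf{vec}(M) = (XW_K\otimes X)\,\mathbf{vec}(W_Q),
\]
which gives the first formula. For $W_K$, write $M = (XW_Q)\,W_K^T\,X^T$, so
\[
\mathbf{vec}(M) = (X\otimes XW_Q)\,\mathbf{vec}(W_K^T) = (X\otimes XW_Q)\,T_{Dd}\,\mathbf{vec}(W_K),
\]
using the commutation matrix $T_{Dd}$. Composing the three factors from Steps 2, 3 and 4 yields the first two formulas.

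Finally, we will check the dimensions of each product: $Nd\times N^2$, then $N^2\times N^2$, then $N^2\times Dd$.
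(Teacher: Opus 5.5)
Your proposal is correct and follows essentially the same route as the paper. Both apply $\mathbf{vec}(ACB) = (B^T\otimes A)\,\mathbf{vec}(C)$ to peel off the outer factor $(W_V^TX^T\otimes I_N)$, chain through the softmax Jacobian, use the commutation matrix $T_{Dd}$ for the $W_K^T$ dependence, and read off the $W_V$ case directly. Your explicit reading of $\Lambda$ at a matrix argument as the permuted block-diagonal $N^2\times N^2$ operator built from the row blocks $\Lambda(P_{i,:})$ is more rigorous than the paper, which applies its vector softmax lemma to the matrix without explaining how the row-wise structure is assembled.
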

\end{greenbox}

\begin{greenbox}
\begin{theorem}\label{thm:cond_jac}
Let $\mathbf{A}(X)$ denote a self-attention matrix, as defined in 
\cref{eqn:attn_eqn_general}, with input $X$ and let 
$J(\mathbf{A}(X))$ denote its Jacobian with respect to the parameter matrices $W_Q$, $W_K$ and $W_V$ as defined in 
\cref{eqn:jac_defn}. Assume that $J(\mathbf{A}(X))$ has full rank so that $\kappa(J(\mathbf{A}(X)))$ is finite.
Then 
\begin{align}
  \kappa(J(\mathbf{A}(X))) \leq 
\kappa(X)^3
&\kappa\big{(}\Lambda(\mathbf{softmax}(XW_QW_K^TX^T))\big{)}\kappa(W_V)\big{(} 
\kappa(W_Q) + \kappa(W_K)
\big{)} \label{eqn:cond_jac}\\
&+ \kappa(X)\kappa(\mathbf{softmax}(XW_QW_K^TX^T)) \nonumber
\end{align}
where $\Lambda$ is defined in \cref{lem:softmax_deriv}.
\end{theorem}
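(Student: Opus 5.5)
The plan is to start from the explicit block formulas in \cref{thm:attn_derivs} and bound the condition number of the stacked Jacobian $J(\mathbf{A}(X))$ block by block. Write $J = [J_Q, J_K, J_V]^T$, where $J_Q, J_K, J_V$ are the three derivatives. Each block is a product of Kronecker factors, so two standard facts do most of the work. First, the singular values of $A\otimes B$ are the pairwise products of singular values, so $\kappa(A\otimes B)=\kappa(A)\kappa(B)$. Second, condition numbers are submultiplicative, $\kappa(ABC)\le \kappa(A)\kappa(B)\kappa(C)$, which holds for full-rank factors of compatible shapes, with $\sigma_{\min}$ understood as the smallest nonzero singular value. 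Permutation matrices such as $T_{Dd}$ and identities have condition number $1$.

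Applying these facts to each block gives the following.
\begin{itemize}
\item For $J_Q$: $\kappa(J_Q)\le \kappa(W_V^TX^T)\,\kappa(\Lambda(\mathbf{softmax}(XW_QW_K^TX^T)))\,\kappa(XW_K)\,\kappa(X)$. Using $\kappa(W_V^TX^T)\le \kappa(W_V)\kappa(X)$ and $\kappa(XW_K)\le\kappa(X)\kappa(W_K)$, this is at most $\kappa(X)^3\kappa(\Lambda)\kappa(W_V)\kappa(W_K)$.
\item For $J_K$: the same argument, with $T_{Dd}$ contributing $1$, gives $\kappa(X)^3\kappa(\Lambda)\kappa(W_V)\kappa(W_Q)$.
\item For $J_V$: $\kappa(I_d\otimes \mathbf{softmax}(\cdot)X)\le \kappa(\mathbf{softmax}(\cdot))\kappa(X)$.
\end{itemize}
These three bounds produce exactly the three summands in \cref{eqn:cond_jac}, so what remains is to show that the condition number of the stacked matrix is at most the sum of the block condition numbers.

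For that aggregation step I would argue as follows. The largest singular value satisfies $\sigma_{\max}(J)^2\le \sum_i \sigma_{\max}(J_i)^2$, hence $\sigma_{\max}(J)\le\sum_i\sigma_{\max}(J_i)$. For the lower bound, since $J=[J_Q,J_K,J_V]^T$ stacks blocks acting on the separate parameter vectors $\mathbf{vec}(W_Q),\mathbf{vec}(W_K),\mathbf{vec}(W_V)$, I would aim to control $\sigma_{\min}(J)$ from below by the block minima, at least $\min_i \sigma_{\min}(J_i)$ or a comparable quantity. This would give $\kappa(J)\le \sum_i \sigma_{\max}(J_i)/\sigma_{\min}(J)$, and I would then try to distribute this sum as $\sum_i\kappa(J_i)$.

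This last step is the main obstacle. Stacking does not in general give $\kappa\le\sum\kappa_i$ when the block minimum singular values differ. I expect to need either an orthogonality or block-diagonal structure, so that $J^TJ$ or $JJ^T$ splits into the separate Gram pieces, or an interpretation of $\kappa$ in which each block's $\sigma_{\min}$ is compared with that of $J$ under the full-rank hypothesis. I would first try to exhibit the block structure, viewing $J$ as a map from the parameter space in which each $J_i$ acts on its own coordinates, and then bound $\sigma_{\min}(J)$ from below and $\sigma_{\max}(J)$ from above via the Gram matrix. If that fails, the fallback is to treat the bound as holding blockwise and combine the three blocks by the triangle inequality on the norms.
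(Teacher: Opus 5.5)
Your three blockwise estimates use the same ingredients as the paper and give the same bounds. The argument is incomplete, however, at exactly the step you flag. You never prove $\kappa(J)\le\kappa(J_Q)+\kappa(J_K)+\kappa(J_V)$, and your fallbacks do not supply it: the triangle inequality only bounds $\sigma_{\max}$ from above, never $\sigma_{\min}$ from below.

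The missing idea is to bound $\sigma_{\min}(J)$ below by the \emph{largest} block minimum, not the smallest. Read $J\in\R^{3dN\times dD}$ as the paper does, so $Jz=(J_Qz,J_Kz,J_Vz)$. Then for every unit vector $z$ and every block $i$,
\[
\|Jz\|_2^2=\|J_Qz\|_2^2+\|J_Kz\|_2^2+\|J_Vz\|_2^2\ \ge\ \|J_iz\|_2^2\ \ge\ \sigma_{\min}(J_i)^2 ,
\]
hence $\sigma_{\min}(J)\ge\max_i\sigma_{\min}(J_i)$. Combined with $\sigma_{\max}(J)\le\sum_i\sigma_{\max}(J_i)$, this gives $\kappa(J)\le\sum_i\sigma_{\max}(J_i)/\max_j\sigma_{\min}(J_j)\le\sum_i\kappa(J_i)$. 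The last step holds because replacing the common denominator by each block's own, smaller, $\sigma_{\min}(J_i)$ only increases each term. This is exactly what the paper does.

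No orthogonality or block-diagonal structure is needed. $J^TJ=\sum_iJ_i^TJ_i$ is always a sum of positive semidefinite pieces, and the smallest eigenvalue of such a sum dominates the smallest eigenvalue of each piece. Under your alternative reading $J=[J_Q\ J_K\ J_V]$, with blocks acting on separate parameter vectors, the same argument runs through $JJ^T=\sum_iJ_iJ_i^T$, now requiring each block to have full row rank. Your target $\sigma_{\min}(J)\ge\min_i\sigma_{\min}(J_i)$ is true but too weak: $\sum_i\sigma_{\max}(J_i)/\min_j\sigma_{\min}(J_j)$ cannot be split into $\sum_i\kappa(J_i)$, which is why you got stuck.

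Your worry is not baseless, though; it points at a convention you need to settle. The max-argument requires $\sigma_{\min}(J_i)=\min_{\|z\|_2=1}\|J_iz\|_2>0$, i.e.\ each block of full column rank. Under the \emph{smallest nonzero singular value} convention you adopted, stacking wide blocks can break the inequality. For example, $J_1=[1\ \ 0]$ and $J_2=[0\ \ \epsilon]$ each have $\kappa=1$, but stacked they give $\mathrm{diag}(1,\epsilon)$ with $\kappa=1/\epsilon$.

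Similarly, \emph{full-rank factors of compatible shapes} do not guarantee $\kappa(CD)\le\kappa(C)\kappa(D)$. Take $C=[I_2\ \ 0]\in\R^{2\times 3}$ and $D\in\R^{3\times 2}$ with orthonormal columns $e_1$ and $\epsilon e_2+\sqrt{1-\epsilon^2}\,e_3$. Then $\kappa(C)=\kappa(D)=1$ but $CD=\mathrm{diag}(1,\epsilon)$. The rule does hold when both factors have full column rank, or both have full row rank. The middle factor $\Lambda$ has neither property, since each row block $\mathrm{Diag}(s)-ss^T$ annihilates the all-ones vector.

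The paper's proof glosses over these same points, so they do not separate your route from the paper's. The step that does is the aggregation bound above.
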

\end{greenbox}

\cref{thm:cond_jac} shows that
$\kappa(J(\mathbf{A}(X)))$ is bounded above by a sum of two terms:
\begin{align}
&\kappa(X)^3
\cdot\kappa\big{(}\Lambda(\mathbf{softmax}(XW_QW_K^TX^T))\big{)}\kappa(W_V)\big{(} 
\kappa(W_Q) + \kappa(W_K)
\big{)} \label{first_term}\\
&\kappa\big(\mathbf{softmax}(XW_QW_K^\top X^\top)\big) \label{second_term}
\end{align}

\textbf{Observation:} Directly conditioning the Jacobian of the self-attention map with respect to the parameters $W_Q$, $W_K$, and $W_V$ during training would require computing the Jacobian at every iteration, which is prohibitively expensive. However, \cref{thm:cond_jac} offers a more efficient alternative: reducing the upper bound in \cref{eqn:cond_jac}. Since we have access to $W_Q$, $W_K$, and $W_V$, lowering their condition numbers reduces the quantity in \cref{first_term}, thereby tightening the bound in \cref{eqn:cond_jac}. The following theorem shows how to do this.

\begin{greenbox}
\begin{theorem}\label{thm:attention_weights_regularize}
Let $W_Q$, $W_K$ and $W_V$ denote the parameters of a self-attention matrix $\mathbf{A}(X)$.
Then there exist matrices $C_Q$, $C_K$ and $C_V$ such that 
\begin{equation}
    \kappa(W_Q + C_Q)\text{, } \kappa(W_K + C_K)\text{, } \kappa(W_V + C_V) \leq 2
\end{equation}
\end{theorem}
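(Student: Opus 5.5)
The plan is to build the correction terms explicitly from the singular value decomposition. The same argument applies to each of $W_Q$, $W_K$ and $W_V$, so I treat a generic $W \in \R^{D\times d}$ and assume $D \geq d$, which is the usual attention setting. Write the thin SVD as $W = U\Sigma V^T$, where $U \in \R^{D\times d}$ has orthonormal columns, $V \in \R^{d\times d}$ is orthogonal, and $\Sigma = \mathrm{Diag}(\sigma_1,\dots,\sigma_d)$ with $\sigma_1 \geq \dots \geq \sigma_d \geq 0$. The natural choice is a correction in the same singular basis,
\begin{equation*}
C = \lambda\, U V^T, \qquad \lambda = \sigma_{\max}(W) = \sigma_1 .
\end{equation*}
If $W = 0$, any matrix with orthonormal columns serves as $C$, since then $\kappa(C) = 1$.

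The key step is that adding $C$ shifts every singular value by the same amount. We have $W + C = U(\Sigma + \lambda I_d)V^T$, and this is again a thin SVD, because $U$ has orthonormal columns, $V$ is orthogonal, and $\Sigma + \lambda I_d$ is diagonal with nonnegative entries $\sigma_i + \lambda$. It follows that $W + C$ has full rank, with $\sigma_{\max}(W+C) = \sigma_1 + \lambda$ and $\sigma_{\min}(W+C) = \sigma_d + \lambda \geq \lambda > 0$. Therefore
\begin{equation*}
\kappa(W + C) = \frac{\sigma_1 + \lambda}{\sigma_d + \lambda} \leq \frac{2\sigma_1}{\sigma_1} = 2 .
\end{equation*}
Taking $(W, C)$ to be $(W_Q, C_Q)$, $(W_K, C_K)$ and $(W_V, C_V)$ in turn gives the three bounds in the statement. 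The correction does not need $W$ itself to be full rank, since $\sigma_{\min}(W+C) \geq \lambda$ regardless.

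More generally, any $\lambda > 0$ gives $\kappa(W+C) = (\sigma_1+\lambda)/(\sigma_d+\lambda)$, and this is at most $2$ exactly when $\lambda \geq \sigma_1 - 2\sigma_d$. I would note this in a remark, because it shows the choice $\lambda = \sigma_1$ is only a convenient sufficient one. It also motivates cheaper surrogates for $\lambda$ and for the basis, which matters because computing a full SVD at every training step is expensive.

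I expect no serious obstacle. The only points that need care are checking that $U(\Sigma+\lambda I)V^T$ is a genuine SVD, so that its singular values can be read off directly, and handling the shape convention. If $D < d$, I would apply the same construction to $W^T$, or equivalently use the SVD with $U$ orthogonal and $V$ having orthonormal columns, since the condition number is unchanged under transposition.
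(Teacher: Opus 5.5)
Your proposal is correct and is essentially the paper's proof: the paper writes $W = USV^T$ and sets $C = U\overline{S}V^T$ with $\sigma_{\max}(W)$ on the diagonal of $\overline{S}$, which is exactly your $\sigma_1 UV^T$ expressed with a full rather than thin SVD, so every singular value is shifted by $\sigma_{\max}(W)$. Your version is slightly more careful than the paper's. You handle $W = 0$, where the paper's construction gives $C = 0$. You also correctly get $\kappa(W+C) \le 2$, with equality when $\sigma_{\min}(W) = 0$, whereas the paper claims a strict inequality $<2$.
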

\end{greenbox}


\begin{figure}[ht!]
    \centering
    \includegraphics[width=1.\linewidth]
    {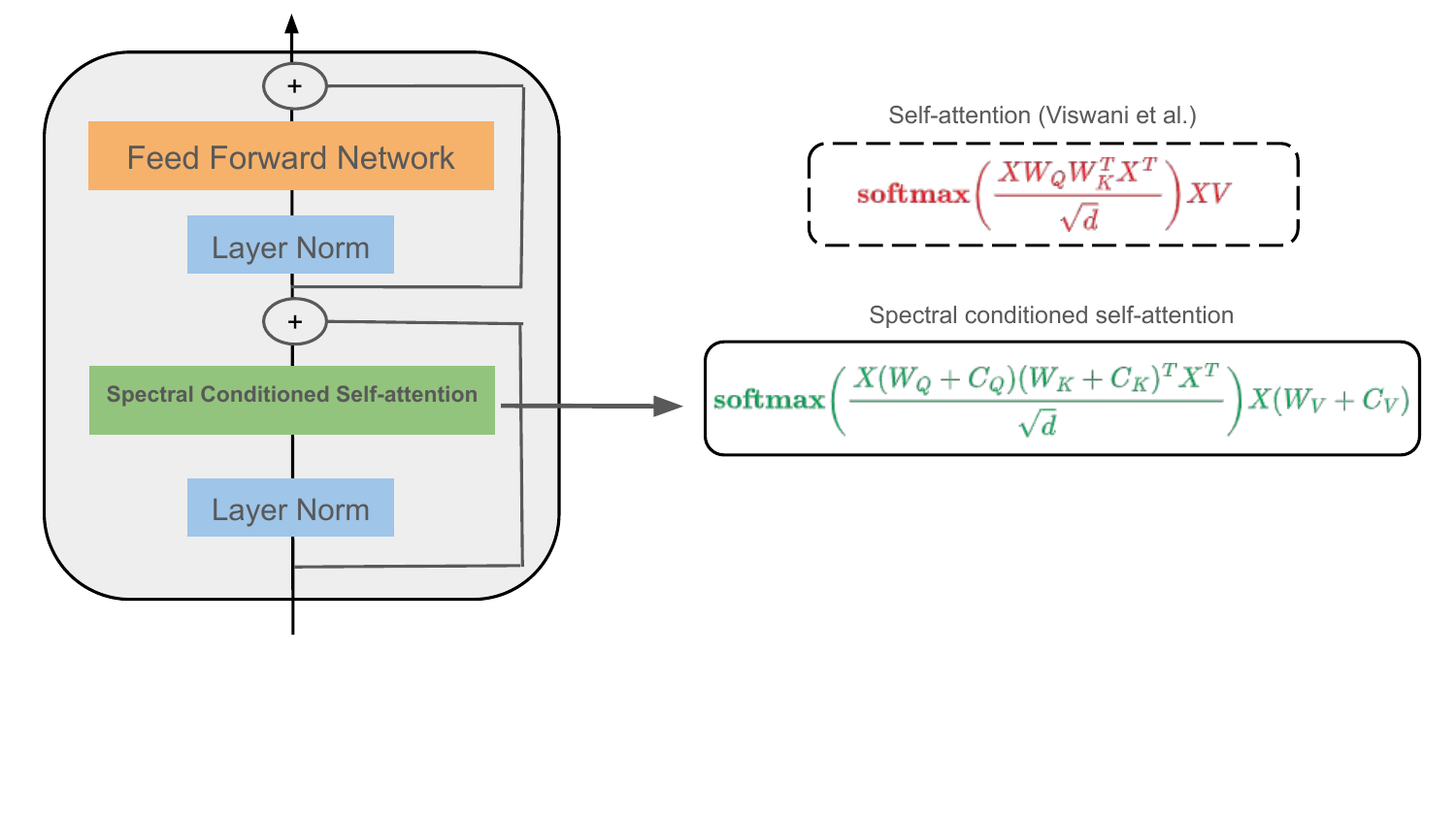}
    \vspace{-4em}
    \caption{An illustration of spectrally conditioned self-attention within a transformer layer. At each layer, the self-attention weights $W_Q$, $W_K$, and $W_V$ are modified by adding correction terms $C_Q$, $C_K$, and $C_V$, respectively. The correction terms $C_Q$, $C_K$, and $C_V$ are initialized before training using \cref{thm:imp_friendly} and remain fixed throughout training.}
    \label{fig:spec_cond_diagram}
\end{figure}

\begin{orangebox}
\paragraph{Overview of proof of \cref{thm:attention_weights_regularize}:} The same proof works for 
$W_Q$, $W_K$ and $W_V$. Thus we will give the main steps for $W_Q$. As in \cref{subsec:prelims} let $W_Q \in \R^{D\times d}$.
\begin{itemize}
    \item[1.] Using the SVD write $W_Q = USV^T$ where $U \in \R^{D\times D}$ is the left singular vectors, 
    $S \in \R^{D \times d}$ is the matrix of singular values on the main diagonal and $V \in \R^{d \times d}$ are the right singular vectors. 
    \item[2.] Define a new matrix $C_Q = U\overline{S}V^T$ where $\overline{S}$ has $\sigma_{\max}(W_Q)$ on its main diagonal and zeros everywhere else.
    \item[3.] The matrix $W_Q + C_Q$ then satisfies 
    $\kappa(W_Q + C_Q) \leq 2$.
\end{itemize}
\end{orangebox}

\subsection{Spectral Conditioned Attention}\label{subsec:spec_cond}

In \cref{thm:attention_weights_regularize}, we showed that adding correction terms $C_Q$, $C_K$, and $C_V$ to the weight matrices $W_Q$, $W_K$, and $W_V$, respectively, significantly reduces their condition numbers $\kappa(W_Q)$, $\kappa(W_K)$, and $\kappa(W_V)$.
This motivates the following definition. 

\begin{definition}\label{defn:spec_condition}
We define \textit{spectral conditioned attention} by 
\begin{equation}
    \mathbf{SpecA}(X) := 
    \mathbf{softmax}(X(W_Q+C_Q)(W_K + C_K)^TX^T)X(W_V + C_V)
\end{equation}
where $C_Q$, $C_K$ and $C_V$ are the correction terms given in 
\cref{thm:attention_weights_regularize}.
\end{definition}
We call the process of adding correction matrices $C_Q$, $C_K$ and $C_V$ to $W_Q$, $W_K$ and $W_V$ respectively to lower their condition number as spectral conditioning. 

\begin{remark}
\cref{thm:attention_weights_regularize}, together with \cref{thm:cond_jac}, shows that spectrally conditioned attention reduces the upper bound in \cref{eqn:cond_jac}, thereby yielding a tighter bound on the condition number of the Jacobian of the self-attention layer.
\end{remark}

The proof overview for \cref{thm:attention_weights_regularize} showed the correction term $C_Q$ (and $C_K$, $C_V$) is derived by taking an SVD. Computing the SVD of each of $W_Q$, $W_K$, $W_V$ at each iteration of training would lead to a significant memory overhead. We therefore provide an implementation friendly form of spectral conditioned attention, motivated by the following theorem (proof given in \cref{app:theory}).
\begin{greenbox}
\begin{theorem}\label{thm:imp_friendly}
Let $A \in \R^{m \times n}$ and  
let $I_k \in \R^{m\times n}$ be the matrix that has $1$ on its main $k \times k$ diagonal, where $k = \min\{m, n\}$, and zeros elsewhere.
Let $\lambda \geq 2$ be a fixed constant and assume that
$\frac{\sigma_{\max}(A)+ \lambda}{\lambda - \sigma_{\min}(A)} \leq \frac{\sigma_{\max}(A)}{\sigma_{\min}(A)}$. Then
$\kappa(A + \lambda I_k) < \kappa(A)$.
\end{theorem}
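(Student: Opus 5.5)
The plan is to bound the two extreme singular values of $A+\lambda I_k$ separately, combine them into a bound on $\kappa(A+\lambda I_k)$, and then use the hypothesis $\frac{\sigma_{\max}(A)+\lambda}{\lambda-\sigma_{\min}(A)} \leq \frac{\sigma_{\max}(A)}{\sigma_{\min}(A)}$ to compare with $\kappa(A)$. Throughout, $I_k$ is the rectangular ``identity'' in $\R^{m\times n}$, all of whose $k$ singular values equal $1$, so $\sigma_{\max}(\lambda I_k)=\sigma_{\min}(\lambda I_k)=\lambda$.

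\emph{Upper bound.} By the triangle inequality for the operator norm,
\[
\sigma_{\max}(A+\lambda I_k)\le \sigma_{\max}(A)+\lambda .
\]

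\emph{Lower bound.} I would use the variational characterisation $\sigma_{\min}(M)=\min_{\|x\|_2=1}\|Mx\|_2$ (in the square case; in the rectangular case, working on the appropriate $k$-dimensional side). This gives
\[
\|(A+\lambda I_k)x\|_2 \ge \lambda\|x\|_2-\|Ax\|_2 .
\]
If $\lambda>\sigma_{\min}(A)$, one wants to conclude $\sigma_{\min}(A+\lambda I_k)\ge \lambda-\sigma_{\min}(A)>0$. Then
\[
\kappa(A+\lambda I_k)\le \frac{\sigma_{\max}(A)+\lambda}{\lambda-\sigma_{\min}(A)}\le \kappa(A).
\]
Positivity of the denominator should follow from $\lambda\ge 2$ together with the hypothesis, which implicitly requires $\lambda>\sigma_{\min}(A)$ for the left-hand side to be a meaningful positive ratio.

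\emph{Main obstacle.} The hard step is this lower bound. The naive estimate $\|Ax\|_2\le\sigma_{\max}(A)$ only yields $\sigma_{\min}(A+\lambda I_k)\ge\lambda-\sigma_{\max}(A)$, by Weyl's inequality $|\sigma_i(A+B)-\sigma_i(B)|\le\sigma_{\max}(A)$. The bound with $\sigma_{\min}(A)$ in its place is not implied by norm inequalities alone. For example, if $A$ has a singular direction where it acts as $-\sigma_{\max}$ along $I_k$, the shift can cancel. My first attempt would therefore be to exploit alignment of $A$ with $I_k$: write $A=USV^T$ and estimate $\|(S+\lambda U^TI_kV)y\|_2$. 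In the aligned case ($U^TI_kV=I_k$, as in the construction behind \cref{thm:attention_weights_regularize}) the singular values of $A+\lambda I_k$ are exactly $\sigma_i(A)+\lambda$. This gives the much stronger bound $\sigma_{\min}\ge\sigma_{\min}(A)+\lambda$, and hence the conclusion. If the general case resists, I would state the result under this alignment (or a positive-semidefiniteness) assumption. Alternatively, I would replace $\lambda-\sigma_{\min}(A)$ by $\lambda-\sigma_{\max}(A)$ in the hypothesis.

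\emph{Strict inequality.} The hypothesis is non-strict but the conclusion is strict, so I would check where slack enters. In the aligned case, $\frac{\sigma_{\max}+\lambda}{\sigma_{\min}+\lambda}<\frac{\sigma_{\max}}{\sigma_{\min}}$ holds strictly whenever $\kappa(A)>1$ and $\lambda>0$. If $\kappa(A)=1$, strictness fails, so that case must be excluded separately.
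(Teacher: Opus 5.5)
You follow essentially the paper's route: bound $\sigma_{\max}$ above and $\sigma_{\min}$ below via triangle/Weyl inequalities, then invoke the hypothesis. You have also found exactly the step where that route breaks. In fact the statement itself is false, so neither your argument nor the paper's can be completed.

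\textbf{Counterexample.} Take $m=n=2$, $\lambda=2$, $A=\mathrm{diag}(-1.9,\,0.1)$.
\begin{itemize}
\item Then $\sigma_{\max}(A)=1.9$, $\sigma_{\min}(A)=0.1$ and $\kappa(A)=19$.
\item The hypothesis reads $3.9/1.9\approx 2.05\le 19$, so it holds.
\item But $A+2I=\mathrm{diag}(0.1,\,2.1)$, so $\kappa(A+2I)=21>19$.
\item With $-2$ in place of $-1.9$, $A+2I$ even becomes singular.
\end{itemize}

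\textbf{Where the paper's proof fails.} The appendix proof states the correct Weyl bound $\sigma_{\min}(A+\lambda I_k)\ge\lambda-\sigma_{\max}(A)$ in \eqref{eq:weyl_2}. In \eqref{eqn:using_wely1} it then silently puts $\lambda-\sigma_{\min}(A)$ in the denominator. That is precisely the substitution you flagged as unjustified. Its chain also ends in $\le\kappa(A)$, so even formally it does not give the strict inequality.

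\textbf{A claim in your write-up to correct.} You say the hypothesis implicitly requires $\lambda>\sigma_{\min}(A)$; it does not. When $\lambda<\sigma_{\min}(A)$ the left-hand side is negative, so the hypothesis holds vacuously. This gives a second family of counterexamples. For instance, $A=\mathrm{diag}(-3,3)$ with $\lambda=2$ has $\kappa(A)=1$ but $\kappa(A+2I)=5$. Positivity of the denominator must be assumed explicitly.

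\textbf{Your repairs.} Both are sound.
\begin{itemize}
\item \emph{The $\sigma_{\max}$ version.} Assume $\lambda>\sigma_{\max}(A)$. Weyl then gives $\kappa(A+\lambda I_k)\le\frac{\sigma_{\max}(A)+\lambda}{\lambda-\sigma_{\max}(A)}$. So the hypothesis $\frac{\sigma_{\max}(A)+\lambda}{\lambda-\sigma_{\max}(A)}\le\kappa(A)$ yields $\kappa(A+\lambda I_k)\le\kappa(A)$. You get the strict conclusion if you make that hypothesis strict.
\item \emph{The aligned case $U^TI_kV=I_k$.} Here $A+\lambda I_k=U(S+\lambda I_k)V^T$ has singular values $\sigma_i(A)+\lambda$. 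The condition number strictly decreases exactly when $\kappa(A)>1$, for any $\lambda>0$, with no further hypothesis.
\end{itemize}
Either version is a correct theorem; the statement as given is not.
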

\end{greenbox}
The key difference between \cref{thm:attention_weights_regularize} and \cref{thm:imp_friendly} when applied to $W_Q$, $W_K$, and $W_V$ is that \cref{thm:attention_weights_regularize} guarantees a condition number strictly less than $2$, whereas \cref{thm:imp_friendly} only ensures a reduction relative to the original i.e. $\kappa(W_Q + \lambda I_k) < \kappa(W_Q)$, and similarly for $W_K$ and $W_V$. However, the advantage of \cref{thm:imp_friendly} over \cref{thm:attention_weights_regularize} is that the correction term $\lambda I_k$ does not require computing the SVD of $W_Q$, $W_K$ or $W_V$ and hence is much more memory efficient.

\begin{bluebox}
\paragraph{Implementation:} We initialized correction matrices $C_Q$, $C_K$, and $C_V$ as $\lambda I_k$ with $\lambda = 10$ (see \cref{app:ic} for an ablation study on $\lambda$), and used spectral conditioned attention where the modified weights were $W_Q + C_Q$, $W_K + C_K$, and $W_V + C_V$. These correction matrices are fixed during training and are not updated, incurring no additional memory overhead during backpropagation. An overview of the resulting attention architecture is shown in \cref{fig:spec_cond_diagram}.
\end{bluebox}
For a comparison of spectral conditioned attention using \cref{thm:attention_weights_regularize} and \cref{thm:imp_friendly} on vision transformers see \cref{app:ic}.

\section{Experiments}\label{sec:exps}

In this section, we evaluate our insights from \cref{sec:theory} on a variety of transformer applications. For each application we will consider an original transformer architecture used within the literature and compare it with one employing spectral conditioning on its attention layer. 
\paragraph{Implementation.}
In all cases, we used the implementation described in \cref{subsec:spec_cond}, where fixed correction terms $C_Q$, $C_K$, and $C_V$ are added to the query, key and value matrices $W_Q$, $W_K$ and $W_V$ respectively within each attention layer, with $\lambda = 10$ (see \cref{app:ic} for an ablation on $\lambda$). These terms are not updated during training and therefore do not introduce any additional trainable parameters. For more details see \cref{app:exps}.

\subsection{Image Classification}\label{subsec:IC}

\paragraph{Vision transformers.} We applied spectral conditioned attention to a variety of modern vision transformers: the original vision transformer (ViT-B) \cite{dosovitskiy2020image}, Swin transformer (Swin-B) \cite{liu2021swin},  Cross-Covariance image transformer (XCiT-M) \cite{ali2021xcit}, Data efficient image transformer (DeiT-B) \cite{touvron2021training}, Dual attention vision transformer (DaViT-B) \cite{ding2022davit} for image classification on ImageNet-1k. Each of these vision transformers uses a different attention block to the standard self-attention used in the ViT-B architecture. However, spectral conditioned attention (see \cref{defn:spec_condition}) works by adding a correction term to each query, key and value matrix and hence can easily be plugged into these modern variants of self-attention. We include these latter models to illustrate that, while our theoretical analysis in \cref{sec:theory} centers on self-attention, spectral conditioned attention can be readily incorporated into modern transformer architectures, including those with more complex designs. 

\paragraph{Validating the theory.}
We validate the theoretical results from \cref{sec:theory} on a ViT-B model trained on the ImageNet-1k dataset. For each experiments we ran five trials and took the mean and standard deviations.
For each attention head in every transformer layer, we compute the minimum singular value of the query ($W_Q$), key ($W_K$), and value ($W_V$) weight matrices, and then average these values across all heads and layers.
To improve the conditioning, we add correction terms $C_Q$, $C_K$, and $C_V$ as described after \cref{thm:imp_friendly}, resulting in spectral conditioned attention matrices $W_Q + C_Q$, $W_K + C_K$, and $W_V + C_V$. The left figure in \cref{fig:vit_analysis} shows the average minimum singular value during training and in each case we see that the minimum singular value of the corrected weights is higher. 
Similar plots for the maximum singular value is shown in \cref{app:ic}.
The middle plot in \cref{fig:vit_analysis} compares the average condition numbers of the original and spectral conditioned query, key and value matrices, demonstrating that the latter are substantially better conditioned, thus empirically verifying the claim of \cref{thm:imp_friendly}.
Finally, the right plot reports the average condition number of the Jacobian of the self-attention matrices, both for the standard ViT-B and for the spectral conditioned version. It also includes the theoretical upper bound from \cref{thm:cond_jac}. While the spectral correction was motivated by this theoretical bound, the results clearly show that it leads to better-conditioned Jacobians in practice, supporting its use in designing more stable attention mechanisms. Note that each plot shows the mean over five trials. Standard deviations cannot be seen due to the log plot of the y-axis but can be found in separate plots in \cref{app:ic}.
We repeated the experiment using the XCiT-M transformer \cite{ali2021xcit}. As shown in \cref{fig:xcit_analysis}, the results exhibit the same trend observed with ViT-B, further supporting the theoretical findings in \cref{sec:theory}. 

\begin{figure}[ht!]
    \centering
    \includegraphics[width=1.\linewidth]
    {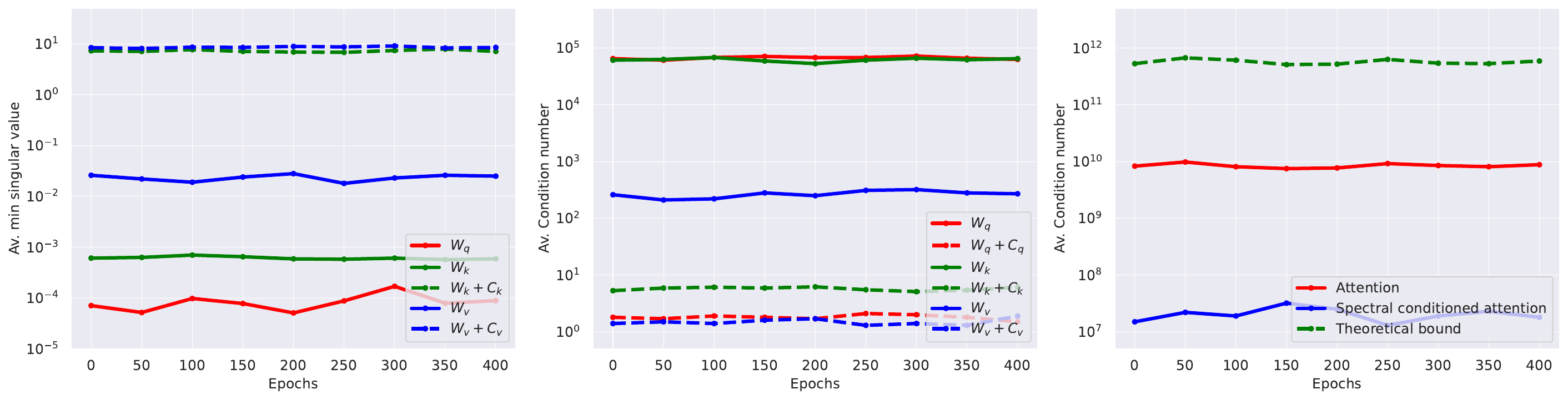}
    \caption{Analysis for ViT-B. \textbf{Left:} Average minimum singular value of the query, key, and value projection matrices ($W_Q$, $W_K$, $W_V$) and their spectrally conditioned counterparts ($W_Q + C_Q$, $W_K + C_K$, $W_V + C_V$) throughout training. \textbf{Middle:} Condition numbers of $W_Q$, $W_K$, and $W_V$, and their spectrally conditioned forms during training. \textbf{Right:} Average condition number of the self-attention Jacobian over the course of training, before and after spectral conditioning, along with the theoretical bound from \cref{eqn:cond_jac}.}
    \label{fig:vit_analysis}
\end{figure}

\begin{figure}[ht!]
    \centering
    \includegraphics[width=1.\linewidth]
    {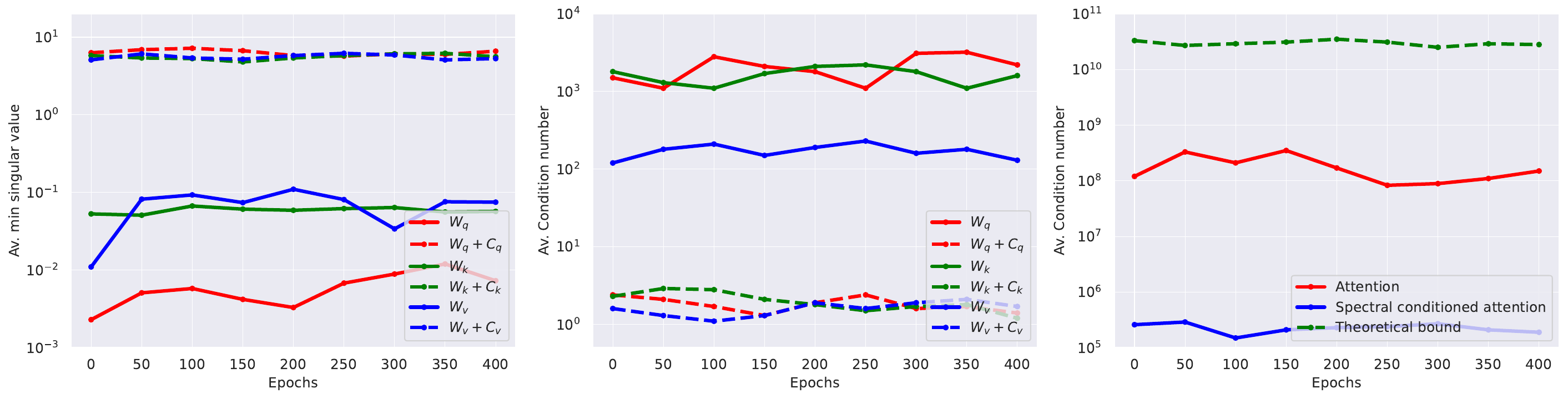}
    \caption{Analysis for XCiT-M. \textbf{Left:} Average minimum singular value of the query, key, and value projection matrices ($W_Q$, $W_K$, $W_V$) and their spectrally conditioned counterparts ($W_Q + C_Q$, $W_K + C_K$, $W_V + C_V$) throughout training. \textbf{Middle:} Condition numbers of $W_Q$, $W_K$, and $W_V$, and their spectrally conditioned forms during training. \textbf{Right:} Average condition number of the self-attention Jacobian over the course of training, before and after spectral conditioning, along with the theoretical bound from \cref{eqn:cond_jac}.}
    \label{fig:xcit_analysis}
\end{figure}

\paragraph{Rank assumption.} In the statement of \cref{thm:cond_jac}, we assumed that the Jacobian is full rank, ensuring the condition number in \cref{eqn:cond_jac} remains finite. Empirically, we observed this assumption holds across all vision transformer architectures considered. For both ViT-B and XCiT-M, the final plots in \cref{fig:vit_analysis} and \cref{fig:xcit_analysis} confirm that the average condition number remains finite and stable throughout training, indicating no rank deficiency in the Jacobian of each layer.

\paragraph{Results on ImageNet-1k.} We trained two versions of each transformer: a baseline model and one incorporating spectral conditioned attention. For each base architecture, we observed that the minimum singular values of the query, key, and value matrices remained below $1$ throughout training (see \cref{fig:vit_analysis} for ViT-B and \cref{fig:xcit_analysis} for XCiT-M), allowing us to apply \cref{thm:imp_friendly} and implement spectral conditioning as described in \cref{subsec:spec_cond}. All models were trained using the AdamW optimizer, see \cref{app:ic}. We trained each model five times with five different random seeds.
The final results, summarized in \cref{tab:vits}, show the mean test accuracy with the standard deviation in brackets. We observe that in every case, the spectral conditioned variant achieves higher test accuracy than its unmodified counterpart. Hardware, memory cost and FLOPS analysis is shown in \cref{app:ic}. An ablation on $\lambda$ is given in \cref{app:nyst}. Furthermore, spectral conditioning was applied together with layer normalization as shown in \cref{fig:spec_cond_diagram}. A discussion comparing the two is given in \cref{app:nyst}.


\begin{table*}[!ht]
\caption{Comparison of Vision Transformers with their original attention architecture versus one with spectrally conditioned attention (Spec. cond.), pre-trained on the ImageNet-1k dataset. We report Top-1\% classification accuracy. In each case, spectral conditioning improves performance over the original attention mechanism.}
\vspace{0.2cm}
\label{tab:vits}
\centering
\begin{tabular}{c|c c c c c}
    \toprule
    \rowcolor{gray!10}
    & ViT-B & DeiT-B & Swin-B & XCiT-M & DaViT-B \\
    \midrule
    Original & 80.7 ($\pm$0.41) & 81.6 ($\pm$0.30) & 83.4 ($\pm$0.28) & 82.6 ($\pm$0.39) & 84.3 ($\pm$0.26) \\
    \rowcolor{lightgreen}
    Spec. cond. & 81.7 ($\pm$0.38) & 82.6 ($\pm$0.32) & 84.1 ($\pm$0.25) & 83.5 ($\pm$0.35) & 84.9 ($\pm$0.21) \\
    \bottomrule
\end{tabular}
\end{table*}

\subsection{Object Detection and Instance Segmentation}\label{subsec:OD}
In this section, we evaluate our methodology on two downstream tasks: object detection and instance segmentation. The goal is to assess the effectiveness of spectral conditioned attention in a fine-tuning setting. We begin by pretraining an XCiT architecture \cite{ali2021xcit} on the ImageNet-1k dataset, followed by fine-tuning on the COCO 2017 dataset \cite{lin2014microsoft}.
The XCiT models are used as backbones within the Mask R-CNN framework \citep{he2017mask}, enhanced with a Feature Pyramid Network (FPN) to improve multi-scale feature representation. To integrate XCiT with FPN, we modify its column structure to extract intermediate features from multiple layers. Specifically, we use the 12-layer XCiT-Small (XCiT-S) adapting their its strides from the original fixed stride of 16 to $[4, 8, 16, 32]$ to match the FPN hierarchy. Downsampling is performed via max pooling, and upsampling is achieved using a single transposed convolution layer.
We conduct this procedure for both the original XCiT-S and one incorporating spectral conditioned attention blocks.


\paragraph{Downstream results.} The results for both object detection and instance segmentation are shown in \cref{tab:transferlearning}. We ran five trials each with a different seed and plotted the mean metric with standard deviation in brackets.
The reported metrics include \( AP^b \) (Average Precision for bounding box predictions), \( AP^b_{50/75} \) (Average Precision at IoU thresholds of 0.50 and 0.75 for bounding boxes), \( AP^m \) (Average Precision for mask predictions), and \( AP^m_{50/75} \) (Average Precision at IoU thresholds of 0.50 and 0.75 for mask predictions). In each case we see the XCiT employing spectral conditioned (spec. cond.) out performs the original architecture. Hardware and memory overheads are discussed in \cref{app:od}.




\begin{table*}[!ht]
\caption{Performance evaluation of object detection and instance segmentation on the COCO dataset. For each metric, our spectrally conditioned architecture (Spec. cond.) outperforms the original.}
\vspace{0.2cm}
\label{tab:transferlearning}
\centering
\setlength{\tabcolsep}{2pt}

\begin{tabular}{cc c c c c c}
    \toprule
    \rowcolor{gray!10}
    Model & $AP^b$ & $AP^b_{50}$ & $AP^b_{75}$ & $AP^m$ & $AP^m_{50}$ & $AP^m_{75}$ \\
    \midrule
    original & 44.9 ($\pm$0.33) & 66.1 ($\pm$0.31) & 48.9 ($\pm$0.35) & 40.1 ($\pm$0.29) & 63.1 ($\pm$0.31) & 42.8 ($\pm$0.34) \\
    \rowcolor{lightgreen}
    Spec. cond. & 45.6 ($\pm$0.31) & 66.7 ($\pm$0.36) & 49.6 ($\pm$0.32) & 40.5 ($\pm$0.33) & 63.4 ($\pm$0.28) & 43.3 ($\pm$0.32) \\
    \bottomrule
\end{tabular}
\end{table*}

\subsection{Nystr\"{o}mformer on LRA Benchmark}\label{subsec:nyst}

To evaluate the effectiveness of our method, we applied it to the Nystr\"{o}mformer architecture \cite{xiong2021nystromformer}, which is specifically designed to handle long-range dependencies efficiently. Experiments were conducted on the Long-Range Arena (LRA) benchmark \cite{tay2020long}, a suite of tasks targeting the ability of models to process extended input sequences. We trained two variants of the Nystr\"{o}mformer: one baseline and one with spectral conditioned attention.

\paragraph{Validating the theory.} 
We validate the theoretical results from \cref{sec:theory} on a Nystr\"{o}mformer trained on the LRA text classification task. Each experiment was repeated five times, reporting the mean and standard deviation. We computed the minimum singular values and condition numbers of $W_Q$, $W_K$, $W_V$, and their spectral-conditioned forms $W_Q + C_Q$, $W_K + C_K$, and $W_V + C_V$. We also evaluated the average condition number of the attention Jacobian, its conditioned variant, and the theoretical bound from \cref{eqn:cond_jac}. The observed trends closely follow those for ViT-B and XCiT-M in \cref{subsec:IC}, supporting our theoretical findings. Results are shown in \cref{fig:nyst_analysis_text}; standard deviations and maximum singular value plots are provided in \cref{app:ic}.


\begin{figure}[ht!]
    \centering
    \includegraphics[width=1.\linewidth]
    {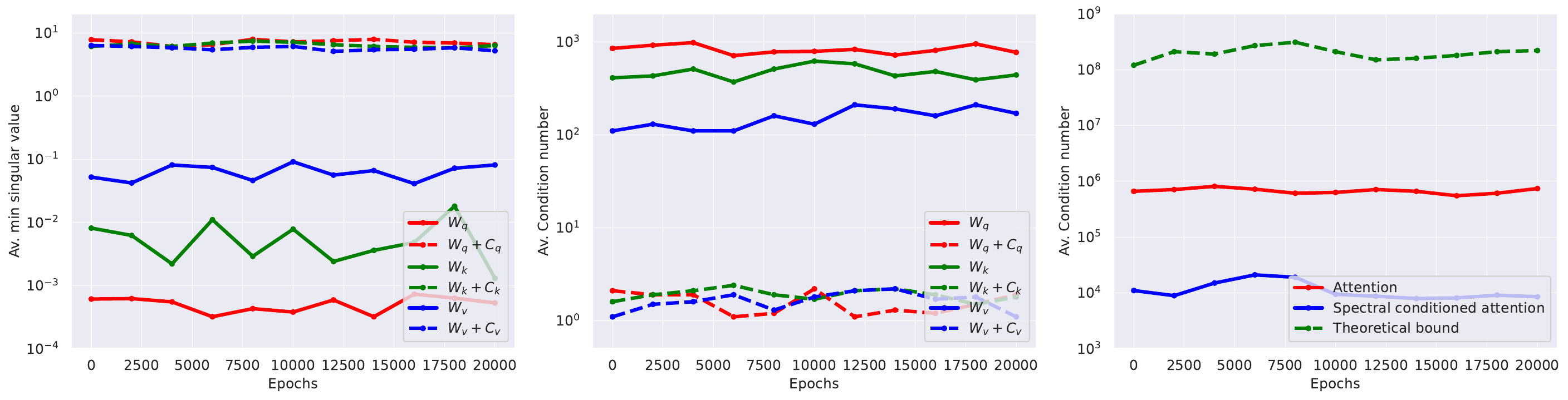}
    \caption{Analysis for Nystr\"{o}mformer on text classification task. \textbf{Left:} Average minimum singular value of the query, key, and value projection matrices ($W_Q$, $W_K$, $W_V$) and their spectrally conditioned counterparts ($W_Q + C_Q$, $W_K + C_K$, $W_V + C_V$) throughout training. \textbf{Middle:} Condition numbers of $W_Q$, $W_K$, and $W_V$, and their spectrally conditioned forms during training. \textbf{Right:} Average condition number of the attention Jacobian over the course of training, before and after spectral conditioning, along with the theoretical bound from \cref{eqn:cond_jac}.}
    \label{fig:nyst_analysis_text}
\end{figure}

\paragraph{Results.} \cref{tab:nystrom} shows the results of the experiment. As can be seen from the table the Nystr\"omformer that employed spectral conditioned attention outperformed the original Nystr\"omformer from \cite{xiong2021nystromformer} in every task in the LRA benchmark. We used the exact training setup and hyperparameters from \cite{xiong2021nystromformer}. Hardware and memory overheads are discussed in \cref{app:nyst} and an ablation on $\lambda$ is given in \cref{app:nyst}. A discussion comparing spectral conditioning and layer normalization is given in \cref{app:nyst}.

\begin{table*}[!ht]
\caption{Comparison of a Nyströmformer using its original architecture (original) with a Nyströmformer using spectral conditioned attention (Spec. cond.) on the LRA benchmark. We report evaluation accuracy (\%). As shown, our methodology consistently improves performance across all tasks.}
\vspace{0.2cm}
\label{tab:nystrom}
\centering
\setlength{\tabcolsep}{7pt}

\begin{tabular}{cc c c c c}
    \toprule
    \rowcolor{gray!10}
    Model & ListOps & Text & Retrieval & Image & Pathfinder \\
    \midrule
    Original & 37.1 ($\pm$0.21) & 63.8 ($\pm$0.24) & 79.8 ($\pm$0.26) & 39.9 ($\pm$0.20) & 72.9 ($\pm$0.25) \\
    \rowcolor{lightgreen}
    Spec. cond. & 37.8 ($\pm$0.23) & 64.8 ($\pm$0.20) & 80.6 ($\pm$0.27) & 40.2 ($\pm$0.22) & 73.7 ($\pm$0.23) \\
    \bottomrule
\end{tabular}
\end{table*}

\subsection{Language Modeling with Crammed BERT}\label{subsec:lang}

We applied our insights from \cref{sec:theory} to a Crammed BERT language model, trained entirely from scratch using masked language modeling following the approach in \cite{geiping2023cramming}. The model consists of 110 million parameters, with 12 transformer layers and 12 attention heads per layer. For this experiment, we train two versions from scratch: the original Crammed BERT baseline model from \cite{geiping2023cramming} and a variant incorporating spectral conditioned attention in each layer, following the implementation from \cref{subsec:spec_cond}. Both models are trained on The Pile dataset \cite{gao2021pile}, a large-scale corpus designed for language model training, following the pretraining regime carried out in \cite{geiping2023cramming}. 
After pretraining, we evaluate the performance on the GLUE benchmark \cite{wang2018glue} following the evaluation methodology outlined in \cite{geiping2023cramming}. Each model was trained five times on the 
Pile dataset \cite{gao2021pile} with five different random seeds. \cref{tab:bert_glue_results} shows the mean results with standard deviations shown in brackets. As can be seen from the table, spectral conditioned attention outperforms the baseline model in every downstream task in the GLUE benchmark. For the
Hardware and memory overheads we refer the reader to \cref{app:cram}.

\begin{table}[h]
\scriptsize
\setlength{\tabcolsep}{2.5pt}  
\caption{Evaluation of a pre-trained Crammed BERT on the GLUE benchmark using the original architecture (Original) and our spectrally conditioned variant (Spec. cond.). Spectral conditioning improves performance across all tasks.}
\label{tab:bert_glue_results}
\centering
\begin{tabular}{lccccccccc}
    \toprule
    \rowcolor{gray!10}
    & MNLI & SST-2 & STSB & RTE & QNLI & QQP & MRPC & CoLA & Avg. \\
    \midrule
    Original & 83.8 ($\pm$0.21) & 92.3 ($\pm$0.18) & 86.3 ($\pm$0.21) & 55.1 ($\pm$0.20) & 90.1 ($\pm$0.23) & 87.3 ($\pm$0.15) & 85.0 ($\pm$0.25) & 48.9 ($\pm$0.26) & 78.6  \\
    \rowcolor{lightgreen}
    Spec. cond. & 84.3 ($\pm$0.20) & 92.7 ($\pm$0.19) & 86.6 ($\pm$0.22) & 55.5 ($\pm$0.24) & 91.0 ($\pm$0.22) & 87.5 ($\pm$0.23) & 86.1 ($\pm$0.23) & 51.7 ($\pm$0.24) & 79.4 \\
    \bottomrule
\end{tabular}
\end{table}

\section{Limitations}\label{sec:limits}
Our spectral conditioning approach is derived by optimizing an upper bound on the condition number of the self-attention Jacobian, rather than directly minimizing the Jacobian’s condition number itself. While this bound provides useful theoretical guidance and aligns with empirical improvements, it remains an indirect proxy. Developing methods to efficiently estimate and control the exact Jacobian conditioning during training would be a valuable direction for future work. Additionally, due to computational resource constraints, our experiments were limited to models with up to 100 million parameters. Whether spectral conditioning offers similar benefits for large-scale transformer models with billions of parameters remains an open question.

\section{Conclusion}\label{sec:conc}
We presented a theoretical framework linking the conditioning of self-attention Jacobians to the spectral properties of the query, key, and value matrices in transformer architectures. Building on this insight, we introduced spectral conditioned self-attention, a simple and broadly applicable method that modifies these matrices with carefully designed correction terms to improve Jacobian conditioning. Our empirical results demonstrate that this approach consistently yields good performance across diverse transformer models and tasks, including image classification, object detection, language modeling, and long-range sequence learning.


\newpage
{
    \bibliographystyle{plain}
    \bibliography{main}
}

\newpage
\appendix

\section{Appendix}

\subsection{Theoretical Framework}\label{app:theory}

In this section we will give the proofs of the lemma and theorems in \cref{subsec:main_theorems} and \cref{subsec:spec_cond}.

To begin with we will need the following standard facts on derivatives of matrices.

\begin{lemma}\label{lem:matrix_deriv}
Let $A \in R^{n\times m}$, $B \in \R^{k\times l}$ and 
$C \in \R^{m\times k}$. Then 
\begin{equation}
    \frac{\partial{ACB}}{\partial{C}} = B^T \otimes A.
\end{equation}
\end{lemma}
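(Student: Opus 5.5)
We prove \cref{lem:matrix_deriv} by vectorization. In this paper, $\frac{\partial F}{\partial C}$ is understood as the matrix $\frac{\partial\, \mathbf{vec}(F)}{\partial\, \mathbf{vec}(C)}$, and vectorization stacks columns. The key step is the classical identity
\begin{equation*}
\mathbf{vec}(ACB) = (B^T\otimes A)\,\mathbf{vec}(C).
\end{equation*}
Once we have it, the map $\mathbf{vec}(C)\mapsto \mathbf{vec}(ACB)$ is linear with matrix $B^T\otimes A$. The Jacobian of a linear map is its matrix, so the derivative is $B^T\otimes A$. The dimensions agree: $B^T\otimes A \in \R^{ln\times km}$, with $ACB\in\R^{n\times l}$ and $C\in\R^{m\times k}$.

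To prove the identity, we compare columns. Write $B=(b_{pj})$ and let $c_p$ denote the $p$-th column of $C$, so that $C=\sum_{p=1}^k c_p e_p^T$. Then the $j$-th column of $ACB$ is
\begin{equation*}
ACBe_j = \sum_{p=1}^k b_{pj}\, A c_p .
\end{equation*}
Now look at the $j$-th block row of $B^T\otimes A$. It is $[\,b_{1j}A,\ b_{2j}A,\ \dots,\ b_{kj}A\,]$. Multiplying it by $\mathbf{vec}(C)=(c_1^T,\dots,c_k^T)^T$ gives exactly $\sum_p b_{pj}Ac_p$. Stacking over $j=1,\dots,l$ yields the identity.

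The only real obstacle is convention rather than mathematics. We must fix column-stacking $\mathbf{vec}$ and the numerator-layout convention $\partial\mathbf{vec}(F)/\partial\mathbf{vec}(C)$, consistent with \cite{magnus2019matrix}, because row-stacking would give $A\otimes B^T$ instead. With these conventions stated, the computation above is routine index bookkeeping.
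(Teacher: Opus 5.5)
Your proof is correct and follows the same route as the paper. Both arguments rest on $\mathbf{vec}(ACB) = (B^T\otimes A)\,\mathbf{vec}(C)$ and read off the Jacobian of this linear map in $\mathbf{vec}(C)$; the paper cites the identity from \cite{magnus2019matrix}, while you also verify it by block-row comparison and state the column-stacking convention explicitly, which is a harmless addition.
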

\begin{proof}
We start by using a well known vectorization identity \cite{magnus2019matrix}
\begin{equation}\label{eqn:vec_identity}
    \mathbf{vec}(ACB) = (B^T \otimes A)\mathbf{vec}(C)
\end{equation}
where $\mathbf{vec}$ denotes the vectorization operator which takes a matrix and maps it to a vector by stacking its columns on top of each other, see \cite{magnus2019matrix}. We then differentiate \cref{eqn:vec_identity} to obtain
\begin{equation}
\frac{\partial{\mathbf{vec}(ACB)}}{\partial{\mathbf{vec}(C)}} = 
B^T \otimes A.
\end{equation}
The result of the lemma follows.
\end{proof}

\begin{lemma}\label{lem:transpose_deriv}
Let $A \in \R^{n\times m}$ so that $A^T \in \R^{m\times n}$. Then
\begin{align}
    \mathbf{vec}(A^T) &= T_{mn}\mathbf{vec}(A) \\
    \frac{\partial{\mathbf{vec}(A^T)}}{\partial{\mathbf{vec}(A)}} &= T_{mn}
\end{align}
where $T_{mn}$ is a commutation matrix.
\end{lemma}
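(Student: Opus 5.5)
The two claims in \cref{lem:transpose_deriv} are of different kinds. The first, that $\mathbf{vec}(A^T)=T_{mn}\mathbf{vec}(A)$ for a fixed permutation matrix $T_{mn}$, is essentially the definition of the commutation matrix from \cref{subsec:prelims}. The second follows because this relation is linear in $A$. So I will make $T_{mn}$ explicit, check that it sends $\mathbf{vec}(A)$ to $\mathbf{vec}(A^T)$, and then differentiate the linear map.

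First I fix the indexing. Let $A\in\R^{n\times m}$, and recall that $\mathbf{vec}$ stacks columns. The entry $a_{ij}$ (with $1\le i\le n$, $1\le j\le m$) then sits in position $(j-1)n+i$ of $\mathbf{vec}(A)$. In $A^T\in\R^{m\times n}$ the same scalar is the $(j,i)$ entry, so it sits in position $(i-1)m+j$ of $\mathbf{vec}(A^T)$. I define $T_{mn}$ by
\begin{equation*}
T_{mn} = \sum_{i=1}^{n}\sum_{j=1}^{m} e_{(i-1)m+j}\, e_{(j-1)n+i}^T ,
\end{equation*}
where the $e$'s are standard basis vectors of $\R^{mn}$. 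Equivalently, $T_{mn}=\sum_{i,j} (E_{ij}\otimes E_{ij}^T)$ with $E_{ij}\in\R^{n\times m}$ the matrix units, which is the form used in \cite{magnus2019matrix}.

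Next I check that $T_{mn}$ is a permutation matrix. The map $(i,j)\mapsto (j-1)n+i$ is a bijection onto $\{1,\dots,mn\}$, and so is $(i,j)\mapsto (i-1)m+j$. Hence every row and every column of $T_{mn}$ contains exactly one $1$. Applying $T_{mn}$ to $\mathbf{vec}(A)$ moves the entry in slot $(j-1)n+i$ to slot $(i-1)m+j$. This is exactly $\mathbf{vec}(A^T)$, which proves the first identity.

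For the derivative, $\mathbf{vec}(A)\mapsto T_{mn}\mathbf{vec}(A)$ is a linear map with constant matrix $T_{mn}$. Its Jacobian with respect to $\mathbf{vec}(A)$ is therefore $T_{mn}$ itself, in the same way that \cref{lem:matrix_deriv} was obtained from \cref{eqn:vec_identity}.

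The only real obstacle is bookkeeping. The subscript convention $T_{mn}$ versus $T_{nm}$ must be matched consistently to the shape $n\times m$ of $A$. It should also agree with the paper's use of $T_{Dd}$ for $W_K\in\R^{D\times d}$; I will check that the index formula above reproduces that usage.
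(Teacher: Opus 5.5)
Your proof is correct and takes the same route as the paper: the paper simply asserts that the first identity is the defining property of the commutation matrix under column-stacking, and that the second follows by differentiating this linear map. You make the index bookkeeping explicit, and your formula for $T_{mn}$ (including the equivalent form $\sum_{i,j}E_{ij}\otimes E_{ij}^T$) and your permutation check are right. One small caveat is that your promised consistency check against $T_{Dd}$ will not literally succeed: the lemma labels the commutation matrix of an $n\times m$ matrix as $T_{mn}$, whereas the preliminaries (and hence $T_{Dd}$ for $W_K\in\R^{D\times d}$) use the opposite subscript order, so your formula would give $T_{dD}$ there---this is a notational inconsistency in the paper, not a flaw in your argument.
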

\begin{proof}
The first equation follows from the definition of the transpose of a matrix \cite{magnus2019matrix}. The second equation then follows from the first.
\end{proof}

We now give the proof of \cref{lem:softmax_deriv}.

\begin{proof}[Proof of \cref{lem:softmax_deriv}]
Let $[z_1,\ldots ,z_n]$ denote a row vector in $\R^n$. Then by definition 
\begin{equation}
\mathbf{softmax}([z_1,\ldots ,z_n]) = 
\bigg{[}\frac{e^{z_1}}{\sum_{i=1}^ne^{z_i}},\ldots ,\frac{e^{z_n}}{\sum_{i=1}^ne^{z_i}}\bigg{]}. 
\end{equation}
From the above equation we can compute the partial derivative and find
\begin{equation}
\frac{\partial \, \mathrm{softmax}(z)_i}{\partial z_j} = \mathrm{softmax}(z)_i \left( \delta_{ij} - \mathrm{softmax}(z)_j \right).
\end{equation}
The term $\frac{\partial \, \mathrm{softmax}(z)_i}{\partial z_j}$ is precisely the $ij$ component of the matrix 
$\frac{\partial \mathbf{softmax}(z)}{\partial z}$. Putting each of these $ij$ terms into an $n \times n$ matrix we find
\begin{equation}
\frac{\partial \mathbf{softmax}(z)}{\partial z} = 
Diag(z) - z\cdot z^T
\end{equation}
which proves the lemma.
\end{proof}

We can use the above lemmas to give the proof of \cref{thm:attn_derivs}.

\begin{proof}[Proof of \cref{thm:attn_derivs}]
We start by establishing the derivative formula for the term 
$\frac{\partial\mathbf{A}(X)}{\partial W_Q}$. We will use the notation used in \cref{subsec:prelims}. Note that by definition
$\frac{\partial\mathbf{A}(X)}{\partial W_Q} \in \R^{dN\times dD}$.
Using \cref{eqn:attn_eqn_general}
\begin{equation}\label{app:eqn:attn_eqn_general}
    \mathbf{A}(X) = I_N\mathbf{softmax}(XW_QW_K^TX^T)XW_V
\end{equation}
where $I_N$ is the $N \times N$ identity matrix. This is done so that we can apply \cref{lem:matrix_deriv}. We then compute
\begin{align}
\frac{\partial\mathbf{A}(X)}{\partial W_Q} &= 
\frac{\partial (I_N\mathbf{softmax}(XW_QW_K^TX^T)XW_V)}{\partial W_Q} \\
&= (W_V^TX^T\otimes I_N)\frac{\partial \mathbf{softmax}(XW_QW_K^TX^T)}{\partial W_Q} \text{ using } \cref{lem:matrix_deriv} \\
&=  (W_V^TX^T\otimes I_N) \Lambda(\mathbf{softmax}(XW_QW_K^TX^T))\frac{\partial (XW_QW_K^TX^T)}{\partial W_Q} 
\text{ using } \cref{lem:softmax_deriv} \\
&= (W_V^TX^T\otimes I_N) \Lambda(\mathbf{softmax}(XW_QW_K^TX^T))
(XW_K\otimes X)
\end{align}
which proves the firs equality in \cref{thm:attn_derivs}. 

To compute $\frac{\partial \mathbf{A}(X)}{\partial W_K} \in \R^{dN\times dD}$ we proceed in a similar way. 
\begin{align}
\frac{\partial \mathbf{A}(X)}{\partial W_K} &= 
\frac{\partial (I_N\mathbf{softmax}(XW_QW_K^TX^T)XW_V)}{\partial W_K} \\
&= 
(W_V^TX^T\otimes I_N)\frac{\partial \mathbf{softmax}(XW_QW_K^TX^T)}{\partial W_K} \text{ using } \cref{lem:matrix_deriv} \\
&=
(W_V^TX^T\otimes I_N) \Lambda(\mathbf{softmax}(XW_QW_K^TX^T))\frac{\partial (XW_QW_K^TX^T)}{\partial W_K} 
\text{ using } \cref{lem:softmax_deriv} \\
&=
(W_V^TX^T\otimes I_N) \Lambda(\mathbf{softmax}(XW_QW_K^TX^T))
(X\otimes XW_Q)T_{Dd} 
\end{align}
where the last equality follows from \cref{lem:matrix_deriv,lem:transpose_deriv}
This establishes the second equality in \cref{thm:attn_derivs}.

To prove the identity for $\frac{\partial\mathbf{A}(X)}{\partial W_V} \in \R^{dN\times dD}$ we write
\begin{equation}
\mathbf{A}(X) = \mathbf{softmax}(XW_QW_K^TX^T)XW_VI_d    
\end{equation}
where $I_d$ is the $d \times d$ identity matrix. Then we
simply apply \cref{lem:matrix_deriv} to obtain
\begin{align}
\frac{\partial\mathbf{A}(X)}{\partial W_V} &= 
\frac{\partial (\mathbf{softmax}(XW_QW_K^TX^T)XW_VI_d)}{\partial W_V} \\
&= I_d \otimes \mathbf{softmax}(XW_QW_K^TX^T)X
\end{align}
which proves the final equality in \cref{thm:attn_derivs}. 
\end{proof}

\begin{proof}[Proof of \cref{thm:cond_jac}]
The proof of \cref{thm:cond_jac} follows from using \cref{thm:attn_derivs} and the definition of the Jacobian of $\mathbf{A}(X)$ with respect to $W_Q$, $W_K$ and $W_V$ given by
\begin{equation}
 J(\mathbf{A}(X)) =  
 \bigg{[}\frac{\partial{(\mathbf{A}(X))}}{\partial{W_Q}}, 
 \frac{\partial{(\mathbf{A}(X))}}{\partial{W_K}}, 
 \frac{\partial{(\mathbf{A}(X))}}{\partial{W_V}}\bigg{]}^T.  
\end{equation}
We recall that the condition number is defined as 
$\kappa(J(\mathbf{A}(X))) = \frac{\sigma_{\max}(J(\mathbf{A}(X)))}{\sigma_{\min}(J(\mathbf{A}(X)))}$ where 
$\sigma_{\max}(J(\mathbf{A}(X)))$ is the maximum singular value of $J(\mathbf{A}(X))$ and 
$\sigma_{\min}(J(\mathbf{A}(X)))$ the minimum singular value which we know is non-zero because of the assumption that $J(\mathbf{A}(X))$ has full rank.
Note that, using the notation in \cref{subsec:prelims}, we have that
$J(\mathbf{A}(X)) \in \R^{3dN \times dD}$ as 
$\frac{\partial\mathbf{A}(X)}{\partial W_Q}$, $\frac{\partial \mathbf{A}(X)}{\partial W_K}$, $\frac{\partial\mathbf{A}(X)}{\partial W_V} \in \R^{dN \times dD}$. For each of notation we will write 
$\mathbf{A}_Q := \frac{\partial\mathbf{A}(X)}{\partial W_Q}$, 
$\mathbf{A}_K := \frac{\partial \mathbf{A}(X)}{\partial W_K}$, 
$\mathbf{A}_V := \frac{\partial\mathbf{A}(X)}{\partial W_V}$.

We will start by computing a bound for the maximum singular value. We have for any vector $z \in \R^{dD}$ we have
\begin{align}
||J(\mathbf{A}(X))z||_2^2 &= 
 \left\lVert\bigg{[}\frac{\partial{(\mathbf{A}(X))}}{\partial{W_Q}}(z), 
 \frac{\partial{(\mathbf{A}(X))}}{\partial{W_K}}(z), 
 \frac{\partial{(\mathbf{A}(X))}}{\partial{W_V}}(z)\bigg{]}^T\right\rVert_2^2 \\
&=  \left\lVert\big{[}\mathbf{A}_Q(z), 
 \mathbf{A}_K(z), 
 \mathbf{A}_V(z)\big{]}\right\rVert_2^2 \\
&= 
\left\lVert \mathbf{A}_Q(z)\right\rVert_2^2 + 
\left\lVert \mathbf{A}_K(z)\right\rVert_2^2 +
\left\lVert \mathbf{A}_V(z)\right\rVert_2^2 \\
&\leq 
\left(\sigma_{\max}(\mathbf{A}_Q)^2 + \sigma_{\max}(\mathbf{A}_K)^2 + 
\sigma_{\max}(\mathbf{A}_V)^2\right)||z||_2^2. 
\end{align}
This implies that
\begin{align}
\sigma_{\max}(J(\mathbf{A}(X))) &:= \max_{z\neq 0}\frac{||J(\mathbf{A}(X))z||_2^2}{||z||_2^2} \\   
&\leq
\sqrt{\sigma_{\max}(\mathbf{A}_Q)^2 + \sigma_{\max}(\mathbf{A}_K)^2 + 
\sigma_{\max}(\mathbf{A}_V)^2} \\
&\leq \sigma_{\max}(\mathbf{A}_Q) + \sigma_{\max}(\mathbf{A}_K) + 
\sigma_{\max}(\mathbf{A}_V).
\end{align}
The next step is to compute a lower bound for the minimum singular value $\sigma_{\min}(J(\mathbf{A}(X)))$. The approach is similar to the above, using the fact that 
$\sigma_{\min}(J(\mathbf{A}(X))) = 
\min_{||z||_2 = 1}J(\mathbf{A}(X)(z)$. We can then use the inequality
\begin{align}
\left\lVert J(\mathbf{A}(X)(z)\right\rVert_2^2 &= 
\left\lVert
\bigg{[}\frac{\partial{(\mathbf{A}(X))}}{\partial{W_Q}}(z), 
 \frac{\partial{(\mathbf{A}(X))}}{\partial{W_K}}(z), 
 \frac{\partial{(\mathbf{A}(X))}}{\partial{W_V}}(z)\bigg{]}^T\right\rVert_2^2 \\
&\geq
\max\left\{\left\lVert\frac{\partial{(\mathbf{A}(X))}}{\partial{W_Q}}(z) \right\rVert,
\left\lVert\frac{\partial{(\mathbf{A}(X))}}{\partial{W_K}}(z) \right\rVert,
\left\lVert\frac{\partial{(\mathbf{A}(X))}}{\partial{W_V}}(z) \right\rVert
\right\} 
\end{align}
Then minimizing the above over the constraint $z \in \R^{dD}$ such that $||z|| = 1$ we obtain
\begin{align}
\sigma_{\min}(J(\mathbf{A}(X))) \geq 
\max\left\{
\sigma_{\min}\left(\frac{\partial{(\mathbf{A}(X))}}{\partial{W_Q}}\right), 
\sigma_{\min}\left(\frac{\partial{(\mathbf{A}(X))}}{\partial{W_K}}\right), 
\sigma_{\min}\left(\frac{\partial{(\mathbf{A}(X))}}{\partial{W_V}}\right)
\right\}.
\end{align}
Combing the bounds on $\sigma_{\max}(J(\mathbf{A}(X)))$ and 
$\sigma_{\min}(J(\mathbf{A}(X)))$ we obtain
\begin{align}
\kappa(J(\mathbf{A}(X))) \leq \kappa\left(\frac{\partial{(\mathbf{A}(X))}}{\partial{W_Q}}\right) + 
\kappa\left(\frac{\partial{(\mathbf{A}(X))}}{\partial{W_K}}\right)
+
\kappa\left(\frac{\partial{(\mathbf{A}(X))}}{\partial{W_V}}\right).
\end{align}
The final step is to get a bound on the condition numbers for each term on the right hand side in the above inequality. This is done using two facts: Firstly, given two matrices $C$ and $D$ such that 
the product $CD$ is full rank then $\kappa(CD) \leq \kappa(C)\kappa(D)$ and the second that $\kappa(C \otimes D) = 
\kappa(C)\kappa(D)$. Using these two facts we can then use 
\cref{thm:attn_derivs} to obtain the bound of \cref{thm:cond_jac} and the proof is finished.
\end{proof}

We will now give the proof of \cref{thm:attention_weights_regularize}.

\begin{proof}[Proof of \cref{thm:attention_weights_regularize}]
We will give the proof for $W_Q$ as the proof for $W_K$ and $W_V$ are exactly analogous.

Take the SVD of $W_Q$ to obtain $W_Q = USV^T$ where 
$U \in \R^{D\times D}$ is the left singular vectors, 
$S \in \R^{D \times d}$ is the matrix of singular values having the singular values of $W_Q$ on its main diagonal, $V \in \R^{d\times d}$ is the right singular vectors. We then define a new matrix $C_Q$ by the formula
\begin{equation}
    C_Q := U\overline{S}V^T
\end{equation}
where $\overline{S} \in \R^{D \times d}$ has $\sigma_{\max}(W_Q)$
on its main diagonal and zeros elsewhere. We then have that
\begin{equation}
W_Q + C_Q = U(S + \overline{S})V^T.
\end{equation}
We can then compute $\sigma_{\max}(W_Q +  C_Q) = 2\sigma_{\max}(W_Q)$ by definition of $C_Q$ and 
$\sigma_{\min}(W_Q +  C_Q) = \sigma_{\min}(W_Q) + \sigma_{\max}(W_Q)$.
This implies
\begin{align}
\kappa(W_Q +  C_Q) &= 
\frac{\sigma_{\max}(W_Q + C_Q)}{\sigma_{\min}(W_Q+C_Q)} \\
&=\frac{2\sigma_{\max}(W_Q)}{\sigma_{\min}(W_Q) + \sigma_{\max}(W_Q)} \\
&<
2
\end{align}
which finishes the proof of the theorem.
\end{proof}

A key issue for implementing the approach in \cref{thm:attention_weights_regularize} into the attention layer is that it requires the computation of the SVD of each of the matrices $W_Q$, $W_K$ and $W_V$. Computing the SVD each time for a large matrix is extremely memory intensive. This is exactly the premise of 
\cref{thm:imp_friendly} whose proof we now give.

\begin{proof}[Proof of \cref{thm:imp_friendly}]
 The proof starts by making use of the Weyl inequalities for singular values \cite{franklin2000matrix} to obtain
\begin{align}
    \sigma_{\max}(A + \lambda\cdot I_k) &\leq \sigma_{\max}(A) + 
    \sigma_{\max}(\lambda\cdot I_k). \label{eq:weyl_1}\\
    \sigma_{\min}(A + \lambda\cdot I_k) &\geq \sigma_{\min}(\lambda\cdot I_k) - \sigma_{\max}(A). \label{eq:weyl_2}
\end{align}
We then compute
\begin{align}
\kappa(A + \lambda\cdot I_k) &:= \frac{\sigma_{\max}(A + \lambda\cdot I_k)}{\sigma_{\min}(A + \lambda\cdot I_k)} \\
&\leq 
\frac{\sigma_{\max}(A) + \sigma_{\max}(\lambda\cdot I_k)}{\sigma_{\min}(\lambda\cdot I_k) - \sigma_{\min}(A)} \label{eqn:using_wely1}\\
&= \frac{\sigma_{\max}(A) + \lambda}{\lambda - \sigma_{\min}(A)} \\
&\leq \frac{\sigma_{\max}(A)}{\sigma_{\min}(A)} \text{ by our assumption } \\
&= \kappa(A)
\end{align}
which completes the proof.
\end{proof}

\paragraph{Other forms of Attention.} The theory developed in \cref{sec:theory} was presented for the self-attention mechanism defined in \cref{subsec:prelims}, to allow for concrete analytical expressions. However, as evident from the derivations, the proposed corrections to the weight matrices $W_Q$, $W_K$, and $W_V$ apply to any attention layer. Since the results depend only on the derivatives of the attention function with respect to these matrices, analogous theorems can be established for more general mechanisms such as cross-attention. Moreover, in \cref{sec:exps}, we empirically demonstrate that our theoretical results extend to a broad class of attention architectures.


\subsection{Experimental Analysis}\label{app:exps}

\paragraph{Details of implementation of spectral conditioned attention.} For all of the experiments in \cref{sec:exps} we implemented spectral conditioned attention following the strategy outlined in \cref{subsec:spec_cond} via \cref{thm:imp_friendly}. We describe this implementation in detail. For each attention layer in a transformer fix the query, key and value weights as $W_Q$, $W_K$, $W_V$. We then form three matrices denoted by $C_Q$, $C_K$ and $C_V$ with each being defined by the equation
\begin{align}
C_Q &= \lambda I_Q    \\
C_K &= \lambda I_K \\
C_V &= \lambda I_V
\end{align}
with $\lambda = 10$ and
where $C_Q$, $C_K$ and $C_V$ have the same shape as $W_Q$, $W_K$ and $W_V$ and $I_Q$, $I_K$ and $I_V$ has the same shape as $C_Q$, $C_K$ and $C_V$ with $1$'s on their main diagonal and zeros elsewhere. The matrices $C_Q$, $C_K$ and $C_V$, which we call the correction matrices, are fixed throughout training the transformer and never change. They are not updated and hence do not add any memory to the backpropagation algorithm and the only memory comes from storing them. 
During the training of the transformer, we add each of these correction matrices to the weights 
$W_Q$, $W_K$ and $W_V$ during each forward pass forming. Thus after initializing the weights $W^0_Q$, $W^0_K$ and $W^0_V$ we add the correction terms $C_Q$, $C_K$ and $C_V$ as follows
\begin{align}
&1. W^0_Q \rightarrow W^0_Q + C_Q \\
&2. W^0_K \rightarrow W^0_K + C_K \\
&3. W^0_V \rightarrow W^0_V + C_V
\end{align}
perform a forward pass using $W^0_Q + C_Q$, $W^0_K + C_K$ and 
$W^0_V + C_V$, then perform a backward pass and during the backward pass only the weights $W^0_Q$, $W^0_K$ and $W^0_V$ are updated forming $W^1_Q$, $W^1_K$ and $W^1_V$. 
This process then repeats, we add the fixed correction terms to $W^1_Q$, $W^1_K$ and $W^1_V$ forming 
\begin{align}
&1. W^1_Q \rightarrow W^1_Q + C_Q \\
&2. W^1_K \rightarrow W^1_K + C_K \\
&3. W^1_V \rightarrow W^1_V + C_V
\end{align}
then a backward pass updates $W^1_Q$, $W^1_K$ and $W^1_V$ and this process continues. Note that \cref{thm:imp_friendly} allows any $\lambda \geq 2$ when forming  Through ablations we found the best value is 
$\lambda = 10$.

\subsubsection{Vision transformers on Imagenet-1k}\label{app:ic}

\paragraph{Validating the theory.} In \cref{subsec:IC} we validated the theory given in \cref{sec:theory} on a ViT-B and XCiT-M architecture. For each experiment we ran five trials with five different random seeds and plotted the mean. The plots in \cref{subsec:IC} were shown in a log scale where the standard deviations were not visible. We have plotted each plot in a new scale that clearly shows the standard deviations. The plots for ViT-B can be seen in 
\cref{fig:vit_min_sing_std,fig:vit_max_sing_std,fig:vit_query_cond_std,fig:vit_cond_std} and the plots for XCiT-M can be seen from 
\cref{fig:xcit_min_sing_std,fig:xcit_max_sing_std,fig:xcit_query_cond_std,fig:xcit_cond_std}. Furthermore, \cref{fig:vit_max_sing_std} validates the assumption in \cref{thm:imp_friendly} on the maximum singular value for ViT-B and  \cref{fig:xcit_max_sing_std} for the maximum singular for XCiT justifying the use of \cref{thm:imp_friendly}.

\paragraph{Ablation on $\lambda$ from \cref{thm:imp_friendly}.} In \cref{sec:exps} we used the implementation for the correction terms given in \cref{sec:theory}. This required choosing a factor 
$\lambda \geq 2$. For the experiments in \cref{sec:exps} we chose $\lambda$ through a grid search treating it as a hyperparameter. An ablation for $\lambda$ on the ViT-B architecture trained on ImageNet-1k is shown in \cref{tab:lamb_vits}. What we found was that $\lambda \geq 10$ gives the best result.

\begin{table}[h]
\caption{Ablation on $\lambda$ for ViT-B}
\centering
\small
\setlength{\tabcolsep}{4pt}
\renewcommand{\arraystretch}{1.2}
\begin{tabular}{c|cccccccc}
\toprule
$\lambda$ & $2$ & $4$ & $6$ & $8$ & $10$ & $12$ & $14$ & $16$ \\
\midrule
ViT-B Acc. & 80.8 & 81.1 & 81.4 & 81.6 & 81.7 & 81.7 & 81.6 & 81.5 \\
\bottomrule
\end{tabular}
\label{tab:lamb_vits}
\end{table}


\paragraph{Relation to Normalization.} Our conditioning methodology can be seen as being related to normalization. By adding correction terms to the weights $W_Q$, $W_K$ and $W_V$ to improve their spectrum we are in effect producing a normalization for their spectrum. We therefore decided to test our spectral conditioning when we remove layer normalization in the vision transformers to understand whether spectral conditioning can replace layer normalization. \cref{tab:spec_cond_ab_norm} shows the results on the ViTs on ImageNet-1k. We ran each experiments 5 times and plot the mean and standard deviation in brackets. We found that if we remove layer normalization and keep spectral conditioning (with $\lambda = 10$) the performance drops. We noticed that this was due to the fact that the weight values in the corrected terms $W_Q + C_Q$, $W_K + C_K$ and $W_V + C_V$ were much larger than without layer normalization which can lead to issues with backpropagation and suggesting that layer normalization was still crucial to maintain weights from getting too large.

\begin{table}[h]
 \caption{Performance comparison of spectral conditioning with and without layer normalization.}
    \centering
    \small 
    \setlength{\tabcolsep}{3pt} 
    \begin{tabular}{cc c c c c}
        \toprule
        & ViT-B & DeiT-B & Swin-B & XCiT-M & DaViT-B \\
        \midrule
        Original (with only layer norm.) & 80.7 ($\pm$0.41) & 81.6 ($\pm$0.30) & 83.4 ($\pm$0.28) & 82.6 ($\pm$0.39) & 84.3 ($\pm$0.26) \\
        Spec. cond. + layer norm. & 81.7 ($\pm$0.38) & 82.6 ($\pm$0.32) & 84.1 ($\pm$0.25) & 83.5 ($\pm$0.35) & 84.9 ($\pm$0.21) \\
        Spec. cond. - layer norm. & 79.8 ($\pm$0.31) & 80.4 ($\pm$0.33) & 80.5 ($\pm$0.23) & 80.0 ($\pm$0.35) & 80.3 ($\pm$0.21) \\
        \bottomrule
    \end{tabular}
    \label{tab:spec_cond_ab_norm}
\end{table}

\paragraph{Comparing \cref{thm:attention_weights_regularize} and \cref{thm:imp_friendly}.} In \cref{sec:theory}, we saw that \cref{thm:attention_weights_regularize} gave a methodology that could reduce the condition number of the query weights, key weights and value weights to less than $2$. However, the reason we did not use this theorem as the basis of our experiments is that it requires an SVD computation which would significantly increase training times when compared to the implementation friendly theorem 
\cref{thm:imp_friendly}. We carried out a comparison on the vision transformer for both methods. As can be seen from \cref{tab:spectral_conditioning_results} the vision transformer employing the methodology from \cref{thm:attention_weights_regularize} take much longer to train.

\begin{table}[h]
\caption{Comparison of accuracy and training time across transformer variants with spectral conditioning of attention following \cref{thm:attention_weights_regularize} and \cref{thm:imp_friendly}.}
\centering
\small
\setlength{\tabcolsep}{6pt}
\renewcommand{\arraystretch}{1.1}
\begin{tabular}{lcc}
\toprule
\textbf{Model} & \textbf{Acc.} & \textbf{Training time (hrs:mins)} \\
\midrule
ViT-B (original) & 80.7 & 29:29 \\
ViT-B spec. cond. (Thm.~3.5) & 82.0 & 41:38 \\
ViT-B spec. cond. (Thm.~3.8) & 81.7 & 29:33 \\
DeiT-B (original) & 81.6 & 26:16 \\
DeiT-B spec. cond. (Thm.~3.5) & 82.9 & 37:54 \\
DeiT-B spec. cond. (Thm.~3.8) & 82.6 & 26:24 \\
Swin-B (original) & 83.4 & 53:12 \\
Swin-B spec. cond. (Thm.~3.5) & 84.3 & 68:32 \\
Swin-B spec. cond. (Thm.~3.8) & 84.1 & 53:26 \\
XCiT-M (original) & 82.6 & 91:03 \\
XCiT-M spec. cond. (Thm.~3.5) & 83.9 & 109:12 \\
XCiT-M spec. cond. (Thm.~3.8) & 83.5 & 91:18 \\
DaViT-M (original) & 84.3 & 75:09 \\
DaViT-M spec. cond. (Thm.~3.5) & 85.2 & 93:12 \\
DaViT-M spec. cond. (Thm.~3.8) & 84.9 & 75:28 \\
\bottomrule
\end{tabular}
\label{tab:spectral_conditioning_results}
\end{table}

\paragraph{Hardware and implementation.} The image classification experiments in \cref{subsec:IC} of the paper were done on Nvidia A100 GPUs.
The implementation of the ViTs were all done using the Timm code base \cite{rw2019timm}. The architectures were all trained from scratch on the ImageNet-1k dataset using the AdamW optimizer following the hyperparameters used in the original papers \cite{dosovitskiy2020image,liu2021swin,ali2021xcit,touvron2021training,ding2022davit}.

\paragraph{Memory Cost.}
We analyze the memory overhead introduced by using spectrally conditioned weight matrices $W_Q + C_Q$, $W_K + C_K$, and $W_V + C_V$ in the forward pass for the ViT-B architecture. A similar analysis gives a similar profile for all the other ViT architectures we consider in \cref{subsec:IC}.
The correction matrices $C_Q$, $C_K$, and $C_V$ are fixed diagonal matrices with constant entries (value $10$) on the main diagonal, are non-trainable, and have gradients disabled (\texttt{requires\_grad = False}).

Computing $X(W_Q + C_Q)$ expands as:
\begin{equation}
X(W_Q + C_Q) = XW_Q + XC_Q,
\end{equation}
where $XW_Q$ is a standard matrix multiplication and $XC_Q$ corresponds to scaling each column of $X$ by $10$. This operation is efficiently implemented as a column-wise scaling, requiring no explicit storage of $C_Q$ beyond a scalar or a small constant vector.

\begin{itemize}
    \item \textbf{Parameters:} No new trainable parameters are introduced; $C_Q$, $C_K$, and $C_V$ are fixed and non-trainable.
    \item \textbf{Gradient Storage:} Since the correction matrices do not participate in backpropagation, no additional gradient memory is used.
    \item \textbf{Activation Memory:} The computation of $XC_Q$ involves element-wise scaling and summation, which reuses existing output tensors without requiring new memory allocations.
\end{itemize}

In practice, the additional memory overhead from spectral conditioning is negligible, as it involves no new trainable parameters, no gradient storage, and no extra activation tensors.

\begin{table}[h]
\centering
\caption{Comparison of memory overhead for spectral conditioning relative to standard attention projections.}
\label{tab:memory_cost}
\begin{tabular}{lcc}
\toprule
\textbf{Memory Aspect} & \textbf{Original Attention} & \textbf{With Spectral Conditioning} \\
\midrule
Trainable Parameters & $3 \times Dd$ & $3 \times Dd$ \\
Gradient Storage & $3 \times Dd$ & $3 \times Dd$ \\
Correction Matrices ($C_Q$, $C_K$, $C_V$) & -- & negligible (constant scalar) \\
Activation Memory & $3 \times N d$ & $3 \times N d$ (no extra tensors) \\
\bottomrule
\end{tabular}
\end{table}

\paragraph{FLOPS Cost.}
We analyze the additional FLOPS overhead introduced by using spectrally conditioned weight matrices $W_Q + C_Q$, $W_K + C_K$, and $W_V + C_V$ in the forward pass of the attention mechanism in the forward pass for the ViT-B architecture. A similar analysis gives a similar profile for all the other ViT architectures we consider in \cref{subsec:IC}. Consider the self-attention formulation:
\begin{equation}
\mathbf{softmax}(X (W_Q + C_Q)(W_K + C_K)^\top X^\top) \cdot X(W_V + C_V),
\end{equation}
where $X \in \mathbb{R}^{N \times D}$ is the input, and $W_Q$, $W_K$, $W_V \in \mathbb{R}^{D \times d}$ are the learnable projection matrices. The correction matrices $C_Q$, $C_K$, and $C_V$ are fixed diagonal matrices with constant entries $10$ along their main diagonal.

Computing the term $X(W_Q + C_Q)$ expands as:
\begin{equation}
X(W_Q + C_Q) = XW_Q + XC_Q.
\end{equation}
The first term, $XW_Q$, corresponds to a standard matrix multiplication with computational complexity $\mathcal{O}(N D d)$, requiring $2NDd$ FLOPS. The second term, $XC_Q$, corresponds to scaling each column of $X$ by $10$, which requires only $Nd$ FLOPS (one multiplication per entry).

Thus, the total FLOPS for computing $X(W_Q + C_Q)$ is:
\begin{equation}
2NDd + Nd.
\end{equation}
An identical cost arises for $X(W_K + C_K)$ and $X(W_V + C_V)$. Therefore, the additional FLOPS introduced by the correction terms across all three projections is:
\begin{equation}
3Nd.
\end{equation}
Compared to the original cost of $6NDd$ FLOPS for the query, key, and value projections without spectral conditioning, the overhead introduced by spectral conditioning is negligible:
\begin{equation}
\frac{3Nd}{6NDd} = \frac{1}{2D},
\end{equation}
which is insignificant for typical values of $D$ used in practice (e.g., $D = 512, 768, 1024$ are common dimension for ViTs).

\begin{table}[h]
\centering
\caption{FLOPS comparison for the forward pass of the attention projection layers with and without spectral conditioning. $N$ denotes the sequence length, $D$ the embedding dimension, and $d$ the projection dimension per head.}
\label{tab:flops_spectral_conditioning}
\begin{tabular}{lcc}
\toprule
\textbf{Operation} & \textbf{FLOPS (Original)} & \textbf{FLOPS (With Spectral Conditioning)} \\
\midrule
Query projection ($XW_Q$) & $2NDd$ & $2NDd + Nd$ \\
Key projection ($XW_K$)    & $2NDd$ & $2NDd + Nd$ \\
Value projection ($XW_V$)  & $2NDd$ & $2NDd + Nd$ \\
\midrule
\textbf{Total Projection FLOPS} & $6NDd$ & $6NDd + 3Nd$ \\
\bottomrule
\end{tabular}
\end{table}

\begin{figure}[ht!]
    \centering
    \includegraphics[width=1.\linewidth]
    {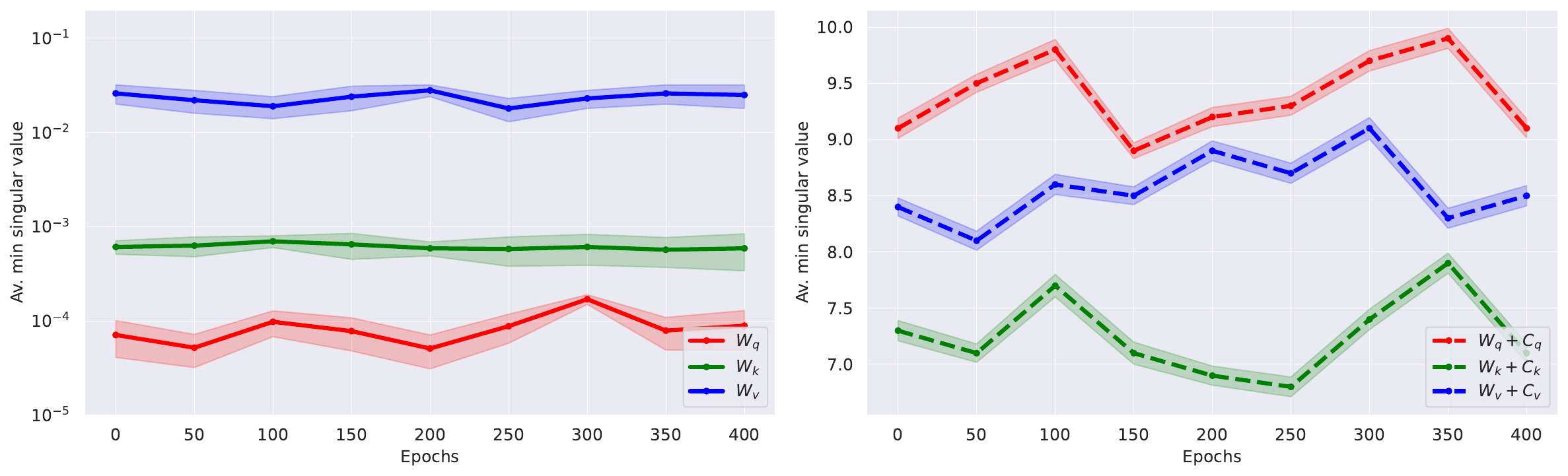}
    \caption{\textbf{Left:} Average minimum singular value of the query, key, and value projection matrices ($W_Q$, $W_K$, $W_V$) for a ViT-B during training. We plot the mean over five trials and the standard deviation.
    \textbf{Right:} Average minimum singular value of the corrected query, key, and value projection matrices ($W_Q + C_Q$, $W_K + C_K$, $W_V + C_V$) for a ViT-B during training. We plot the mean over five trials and the standard deviation.}
    \label{fig:vit_min_sing_std}
\end{figure}

\begin{figure}[ht!]
    \centering
    \includegraphics[width=1.\linewidth]
    {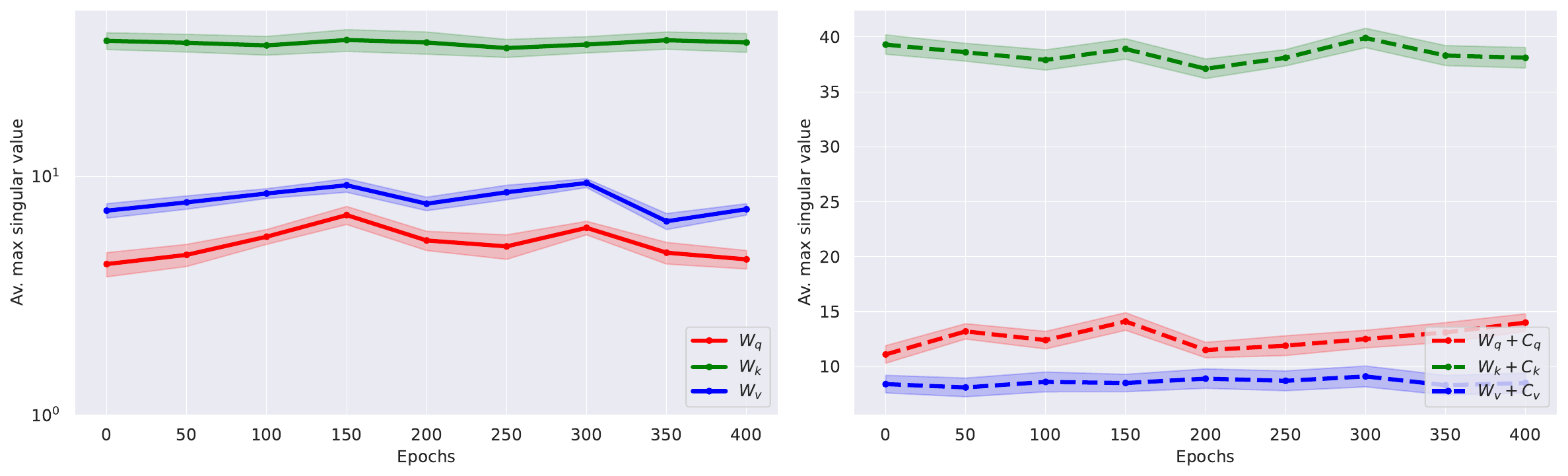}
    \caption{\textbf{Left:} Average maximum singular value of the query, key, and value projection matrices ($W_Q$, $W_K$, $W_V$) for a ViT-B during training. We plot the mean over five trials and the standard deviation.
    \textbf{Right:} Average maximum singular value of the corrected query, key, and value projection matrices ($W_Q + C_Q$, $W_K + C_K$, $W_V + C_V$) for a ViT-B during training. We plot the mean over five trials and the standard deviation.}
    \label{fig:vit_max_sing_std}
\end{figure}

\begin{figure}[ht!]
    \centering
    \includegraphics[width=1.\linewidth]
    {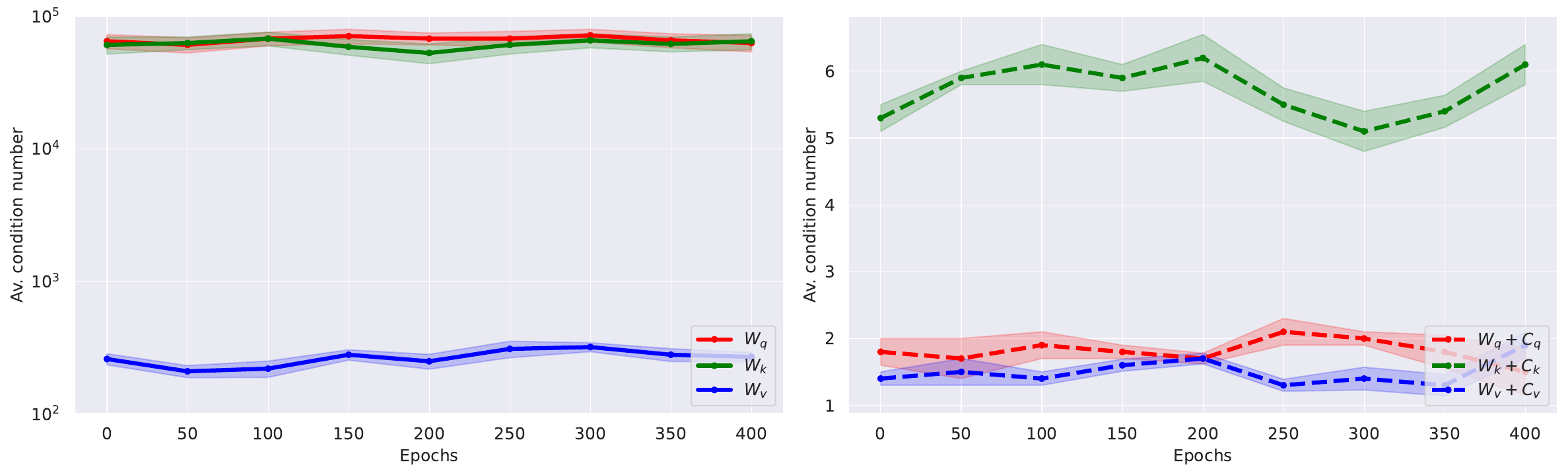}
    \caption{\textbf{Left:} Average condition number of the query, key, and value projection matrices ($W_Q$, $W_K$, $W_V$) for a ViT-B during training. We plot the mean over five trials and the standard deviation.
    \textbf{Right:} Average condition number of the corrected query, key, and value projection matrices ($W_Q + C_Q$, $W_K + C_K$, $W_V + C_V$) for a ViT-B during training. We plot the mean over five trials and the standard deviation.}
    \label{fig:vit_query_cond_std}
\end{figure}

\begin{figure}[ht!]
    \centering
    \includegraphics[width=0.7\linewidth]
    {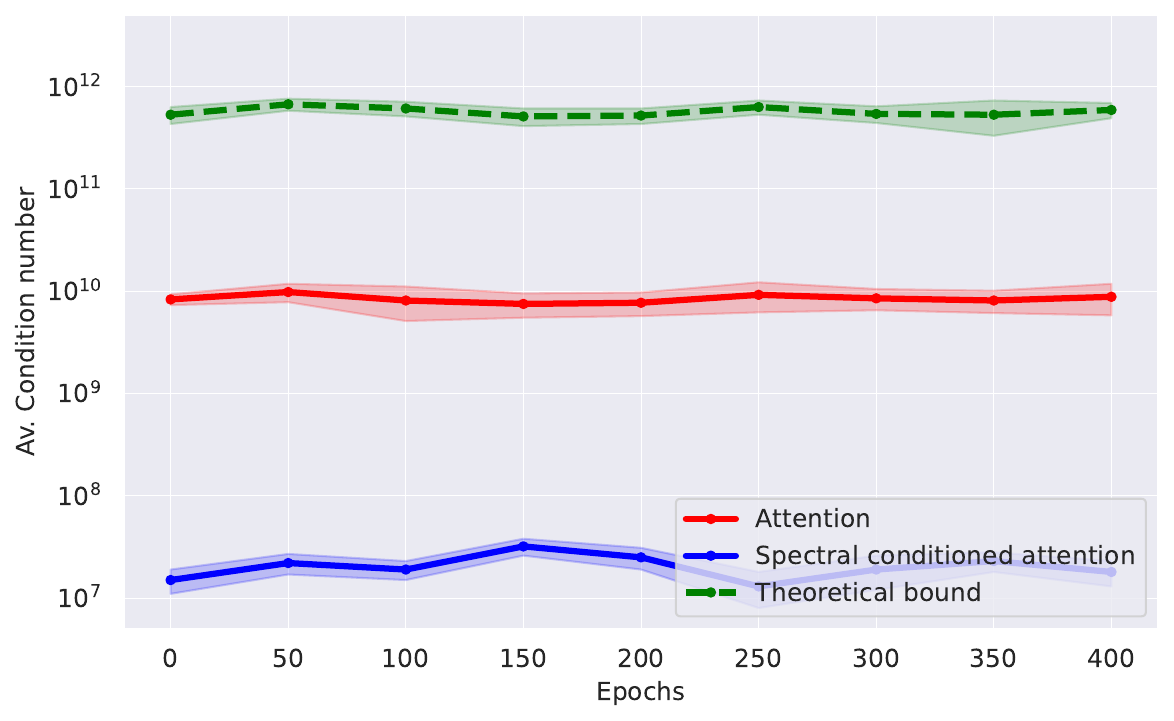}
    \caption{Average condition number of the self-attention Jacobian of a ViT-B over the course of training, before and after spectral conditioning, along with the theoretical bound from \cref{eqn:cond_jac}. We ran five trials with five different random seeds with plots showing the mean and standard deviations.}
    \label{fig:vit_cond_std}
\end{figure}

\begin{figure}[ht!]
    \centering
    \includegraphics[width=1.\linewidth]
    {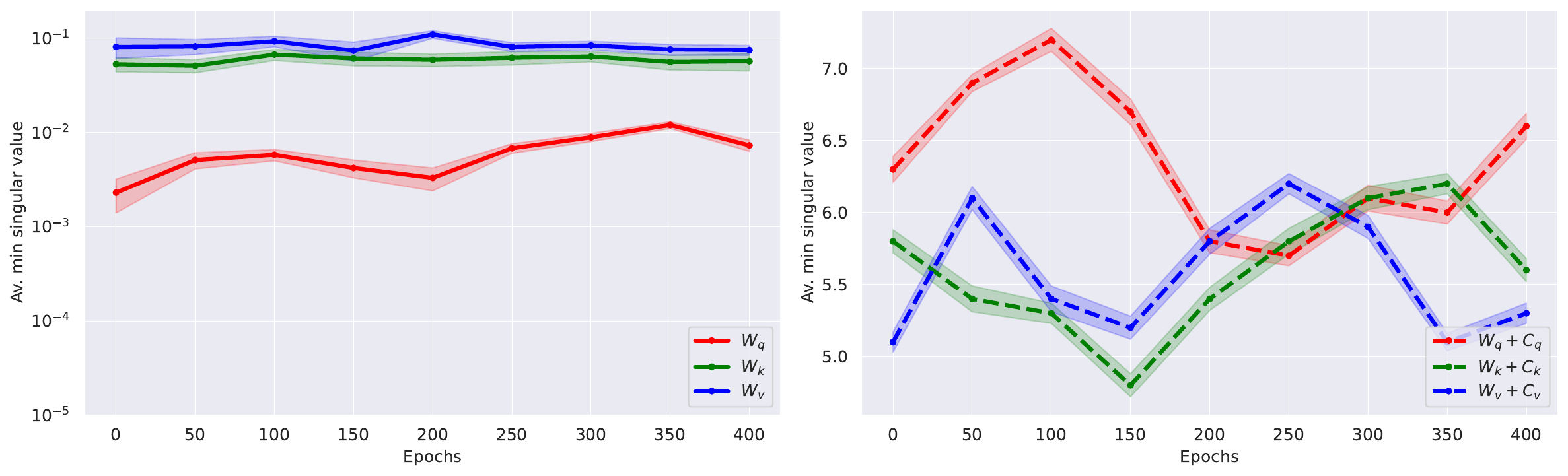}
    \caption{\textbf{Left:} Average minimum singular value of the query, key, and value projection matrices ($W_Q$, $W_K$, $W_V$) for a XCiT-M during training. We plot the mean over five trials and the standard deviation.
    \textbf{Right:} Average minimum singular value of the corrected query, key, and value projection matrices ($W_Q + C_Q$, $W_K + C_K$, $W_V + C_V$) for a XCiT-M during training. We plot the mean over five trials and the standard deviation.}
    \label{fig:xcit_min_sing_std}
\end{figure}

\begin{figure}[ht!]
    \centering
    \includegraphics[width=1.\linewidth]
    {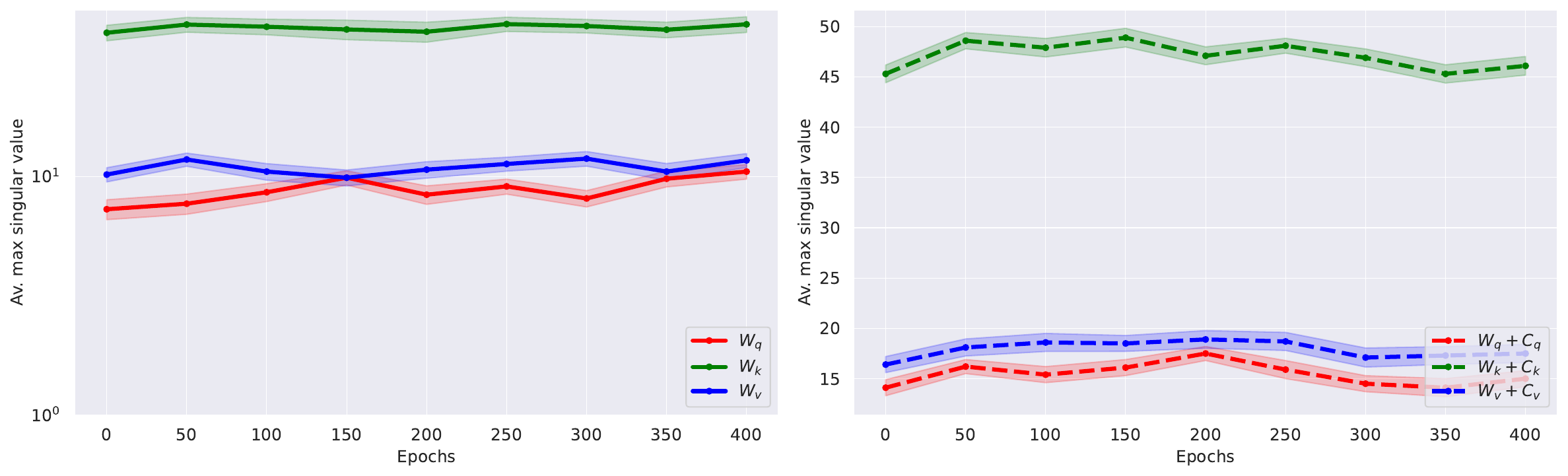}
    \caption{\textbf{Left:} Average maximum singular value of the query, key, and value projection matrices ($W_Q$, $W_K$, $W_V$) for a XCiT-M during training. We plot the mean over five trials and the standard deviation.
    \textbf{Right:} Average maximum singular value of the corrected query, key, and value projection matrices ($W_Q + C_Q$, $W_K + C_K$, $W_V + C_V$) for a XCiT-M during training. We plot the mean over five trials and the standard deviation.}
    \label{fig:xcit_max_sing_std}
\end{figure}

\begin{figure}[ht!]
    \centering
    \includegraphics[width=1.\linewidth]
    {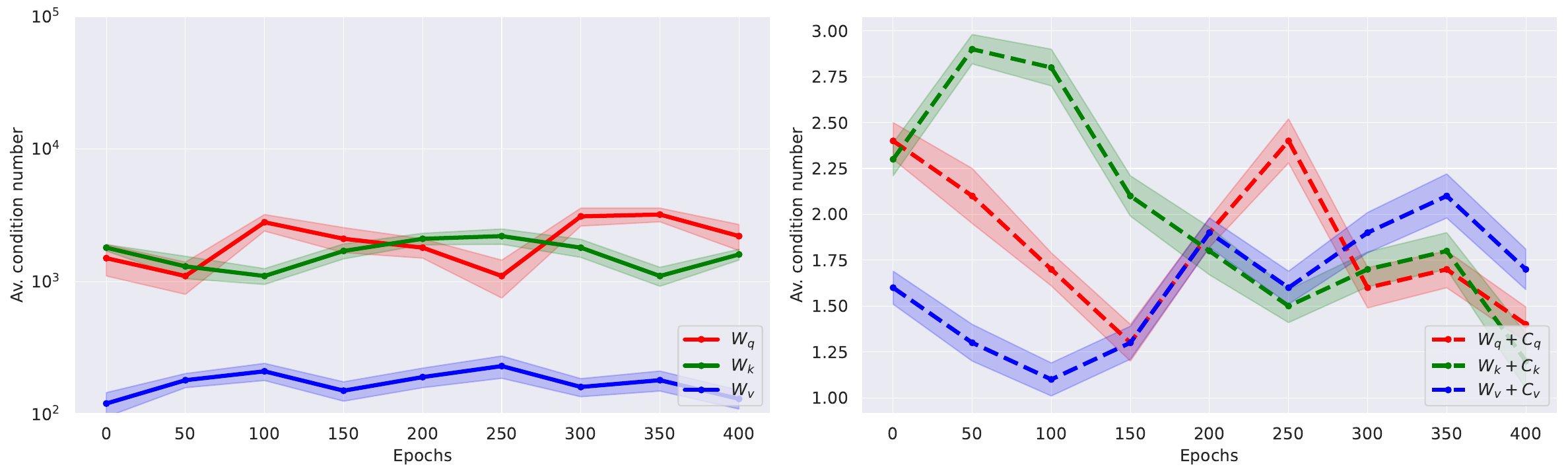}
    \caption{\textbf{Left:} Average condition number of the query, key, and value projection matrices ($W_Q$, $W_K$, $W_V$) for a XCiT-M during training. We plot the mean over five trials and the standard deviation.
    \textbf{Right:} Average condition number of the corrected query, key, and value projection matrices ($W_Q + C_Q$, $W_K + C_K$, $W_V + C_V$) for a XCiT-M during training. We plot the mean over five trials and the standard deviation.}
    \label{fig:xcit_query_cond_std}
\end{figure}

\begin{figure}[ht!]
    \centering
    \includegraphics[width=0.7\linewidth]
    {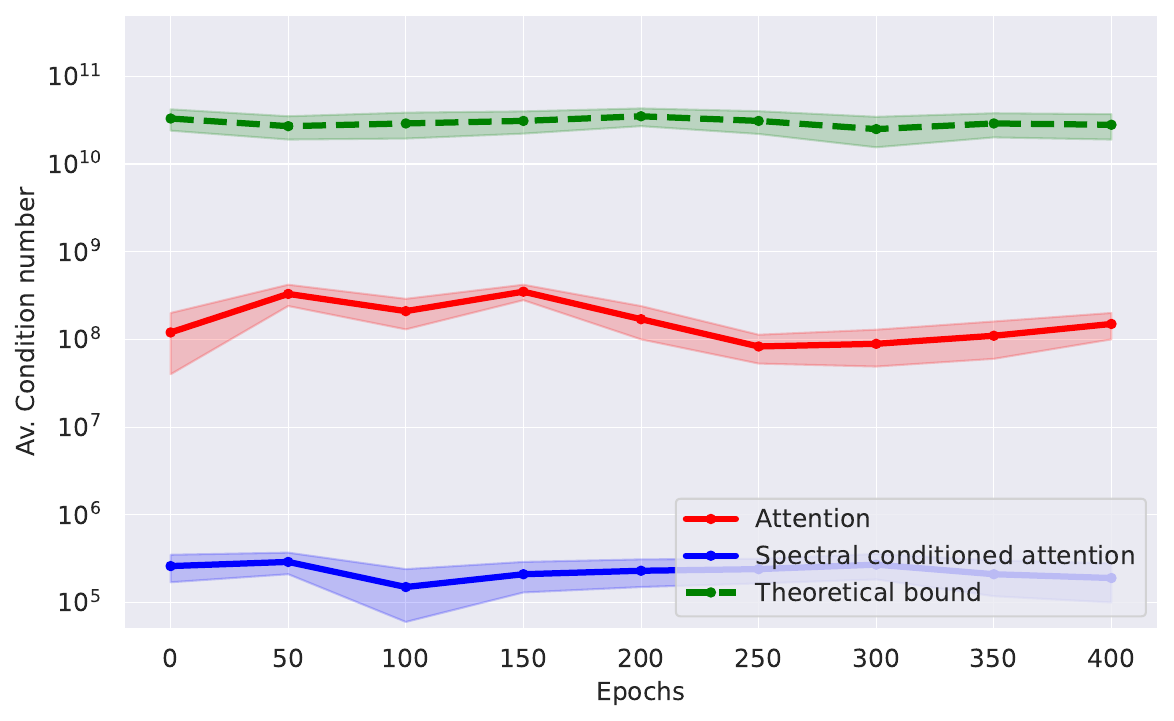}
    \caption{Average condition number of the self-attention Jacobian of a XCiT-M over the course of training, before and after spectral conditioning, along with the theoretical bound from \cref{eqn:cond_jac}. We ran five trials with five different random seeds with plots showing the mean and standard deviations.}
    \label{fig:xcit_cond_std}
\end{figure}

\subsubsection{Object detection and instance segmentation}\label{app:od}

\paragraph{Hardware and Implementation:} The experiments for \cref{subsec:OD} of the paper on object detection and instance segmentation were carried out on Nvidia A100 GPUs. The implementation followed 
 \cite{he2017mask}. We used the code base given by the GitHub \cite{matterport_maskrcnn} following their exact training regime.

 \paragraph{Memory and Overheads:} For this experiment we used a pre-trained XCiT-S and since, as explained in \cref{app:ic}, the correction terms are not trained and fixed throughout training there was no real memory overhead.

 \paragraph{Further results:} We also ran the object detection and instance segmentation results using a Swin-base vision transformer as the backbone. The result are shown in \cref{tab:swin_speccond_results}.

 \begin{table}[h]
\caption{Comparison of Swin-base and spectrally conditioned (spec. cond.) Swin-base on COCO benchmarks.}
    \centering
    \small
    \setlength{\tabcolsep}{6pt}
    \renewcommand{\arraystretch}{1.1}
    \begin{tabular}{lcccccc}
        \toprule
        \textbf{Model} & $\text{AP}^\text{b}$ & $\text{AP}^\text{b}_{50}$ & $\text{AP}^\text{b}_{75}$ & $\text{AP}^\text{m}$ & $\text{AP}^\text{m}_{50}$ & $\text{AP}^\text{m}_{75}$ \\
        \midrule
        Swin-base & 45.9 & 67.1 & 50.1 & 41.1 & 63.7 & 43.8 \\
        Spec. cond. Swin-base & \textbf{46.8} & \textbf{68.1} & \textbf{50.7} & \textbf{41.8} & \textbf{64.2} & \textbf{44.6} \\
        \bottomrule
    \end{tabular}
    \label{tab:swin_speccond_results}
\end{table}

 \subsection{Nystr\"omformer on LRA benchmark}\label{app:nyst}

 \paragraph{Validating the theory.} In \cref{subsec:nyst} we validated the theory given in \cref{sec:theory} on a 
 Nystr\"omformer on the text classification task in the LRA benchmark.
 For each experiment we ran five trials with five different random seeds and plotted the mean. The plots in \cref{subsec:nyst} were shown in a log scale where the standard deviations were not visible. We have plotted each plot in a new scale that clearly shows the standard deviations. The plots are shown from 
\cref{fig:nyst_text_min_sing_std,fig:nyst_text_max_sing_std,fig:nyst_text_cond_query_std,fig:nyst_text_cond_std}. We also ran a similar empirical analysis for the listops task in the LRA benchmark. The results of which can be seen in \cref{fig:nyst_listops_analysis,fig:nyst_listops_min_sing_std,fig:nyst_listops_max_sing_std,fig:nyst_listops_cond_query_std,fig:nyst_listops_cond_std}. Furthermore, \cref{fig:nyst_text_max_sing_std} and \cref{fig:nyst_listops_max_sing_std} validates the assumption in \cref{thm:imp_friendly} on the maximum singular value justifying the use of \cref{thm:imp_friendly}.

\paragraph{Ablation on $\lambda$ from \cref{thm:imp_friendly}.} For the experiments in \cref{subsec:nyst} we chose $\lambda$ through a grid search treating it as a hyperparameter. An ablation for $\lambda$ on the Nystr\"{o}mformer trained on the text classification task on the LRA benchmark is shown in \cref{tab:lamb_nyst}. Once again we found that $\lambda \geq 10$ gives the best result.

 \begin{table}[h]
 \caption{Ablation on $\lambda$ for Nystr\"{o}mformer on text classification task.}
\centering
\small
\setlength{\tabcolsep}{4pt}
\renewcommand{\arraystretch}{1.2}
\begin{tabular}{c|cccccccc}
\toprule
$\lambda$ & $2$ & $4$ & $6$ & $8$ & $10$ & $12$ & $14$ & $16$ \\
\midrule
Acc. & 63.7 & 64.1 & 64.3 & 64.6 & 64.8 & 64.8 & 64.7 & 64.7 \\
\bottomrule
\end{tabular}
\label{tab:lamb_nyst}
\end{table}


\paragraph{Relation to Normalization.} We compared spectral conditioning on a Nystr\"omformer with and without layer normalization. As can be seen in \cref{tab:spec_norm_nyst} we see that layer normalization is important for spectral conditioning. Once again when we removed layer normalization and just applied spectral conditioning we found that the weights were large resulting in poorer performance.

\begin{table}[h]
\caption{Comparing spectral conditioning with and without layer normalization for the Nystr\"omformer on text classification.}
    \centering
    \small 
    \setlength{\tabcolsep}{6pt} 
    \begin{tabular}{cc}
        \hline
         & Acc.(\%) \\
        \hline
        Original (with only layer norm.) & 63.8 ($\pm$0.24) \\
        \hline
        Spec. cond. + layer norm. & 64.8 ($\pm$0.20) \\
        \hline
        Spec. cond. - layer norm & 62.3 ($\pm$ 0.23) \\
        \hline
    \end{tabular}
    \label{tab:spec_norm_nyst}
\end{table}

\paragraph{Hardware.} All the experiments for the
Nystr\"omformer on LRA benchmark results in \cref{subsec:nyst} were carried out on Nvidia A100 GPUs following the implementation and hyperparameter settings given in \cite{nystromformer_github}. 

\paragraph{Memory and Overheads:} The analysis carried out on FLOPS and memory in \cref{app:ic} apply in this case as once again the correction matrices $C_Q$, $C_K$ and $C_V$ are fixed throughout training and only added for forward passes.

\begin{figure}[ht!]
    \centering
    \includegraphics[width=1.\linewidth]
    {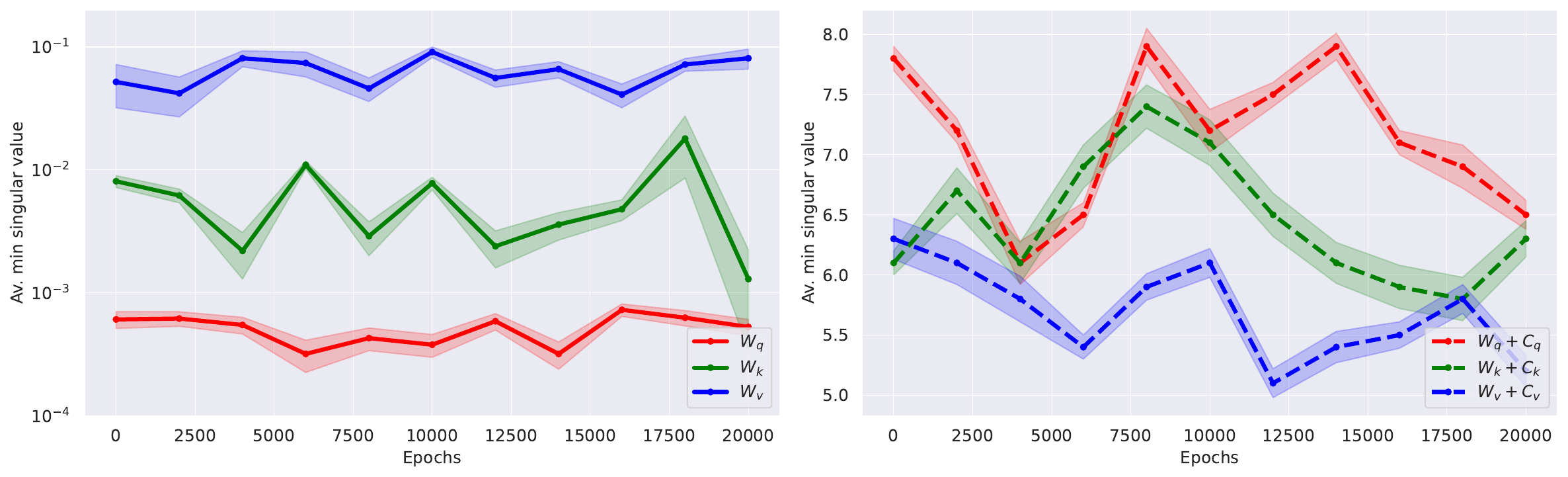}
    \caption{\textbf{Left:} Average minimum singular value of the query, key, and value projection matrices ($W_Q$, $W_K$, $W_V$) for a Nystr\"omformer on the text classification task during training. We plot the mean over five trials and the standard deviation.
    \textbf{Right:} Average minimum singular value of the corrected query, key, and value projection matrices ($W_Q + C_Q$, $W_K + C_K$, $W_V + C_V$) for a Nystr\"omformer on the text classification during training. We plot the mean over five trials and the standard deviation.}
    \label{fig:nyst_text_min_sing_std}
\end{figure}

\begin{figure}[ht!]
    \centering
    \includegraphics[width=1.\linewidth]
    {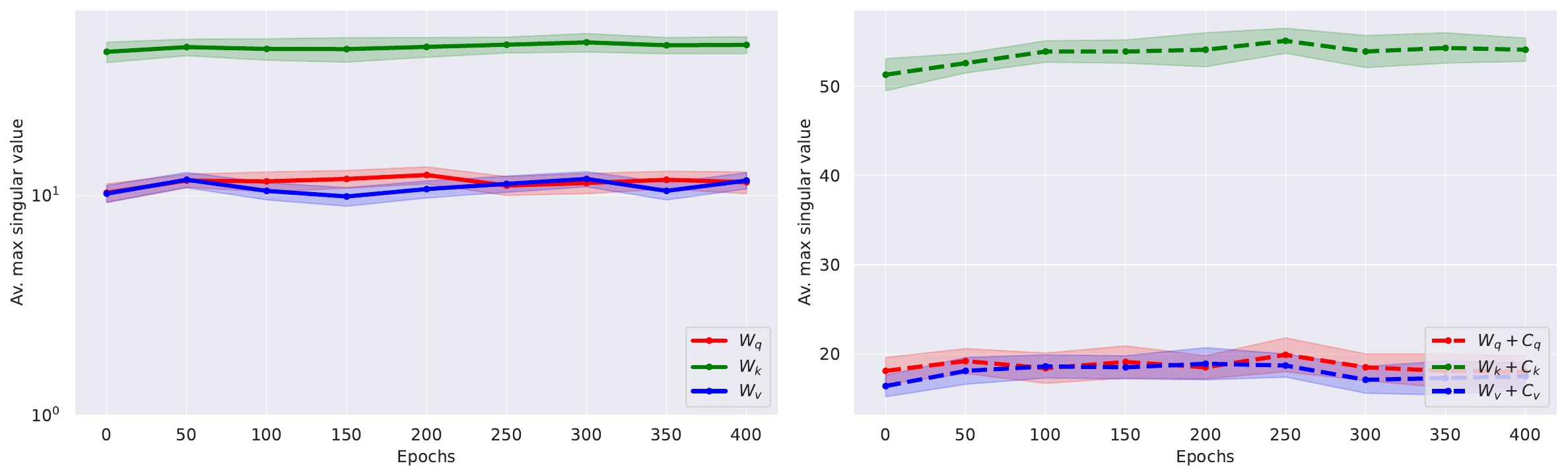}
    \caption{\textbf{Left:} Average maximum singular value of the query, key, and value projection matrices ($W_Q$, $W_K$, $W_V$) for a Nystr\"omformer on the text classification task during training. We plot the mean over five trials and the standard deviation.
    \textbf{Right:} Average maximum singular value of the corrected query, key, and value projection matrices ($W_Q + C_Q$, $W_K + C_K$, $W_V + C_V$) for a Nystr\"omformer on the text classification during training. We plot the mean over five trials and the standard deviation.}
    \label{fig:nyst_text_max_sing_std}
\end{figure}

\begin{figure}[ht!]
    \centering
    \includegraphics[width=1.\linewidth]
    {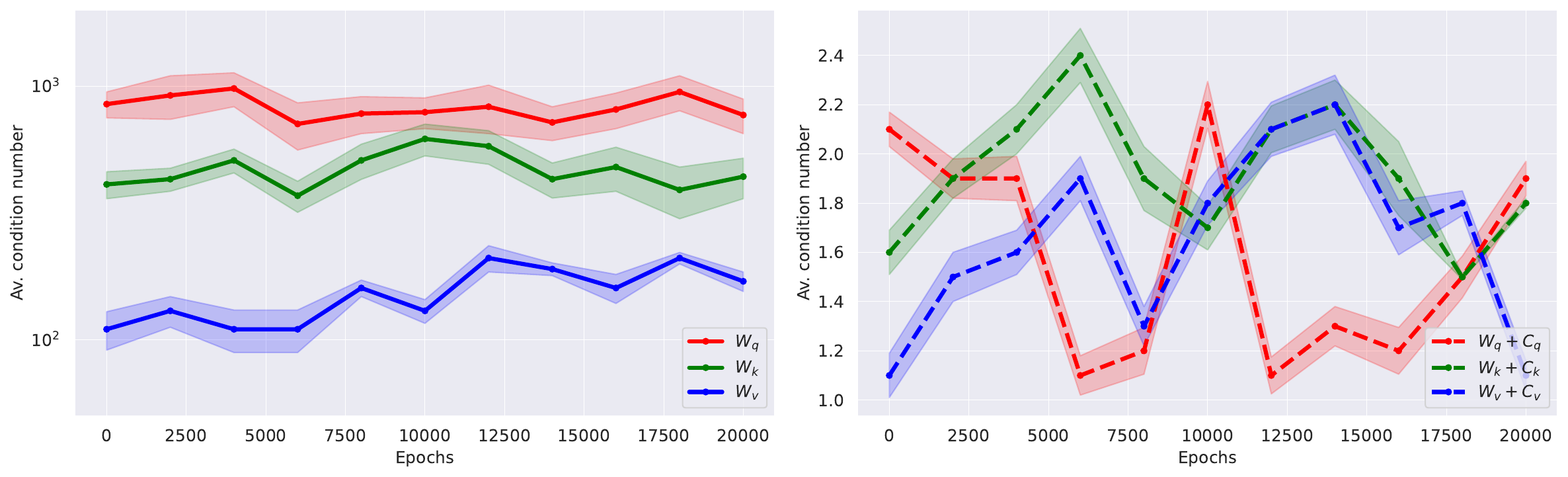}
    \caption{\textbf{Left:} Average condition number of the query, key, and value projection matrices ($W_Q$, $W_K$, $W_V$) for a Nystr\"omformer on the text classification task during training. We plot the mean over five trials and the standard deviation.
    \textbf{Right:} Average condition number of the corrected query, key, and value projection matrices ($W_Q + C_Q$, $W_K + C_K$, $W_V + C_V$) for a Nystr\"omformer on the text classification task during training. We plot the mean over five trials and the standard deviation.}
    \label{fig:nyst_text_cond_query_std}
\end{figure}

\begin{figure}[ht!]
    \centering
    \includegraphics[width=0.7\linewidth]
    {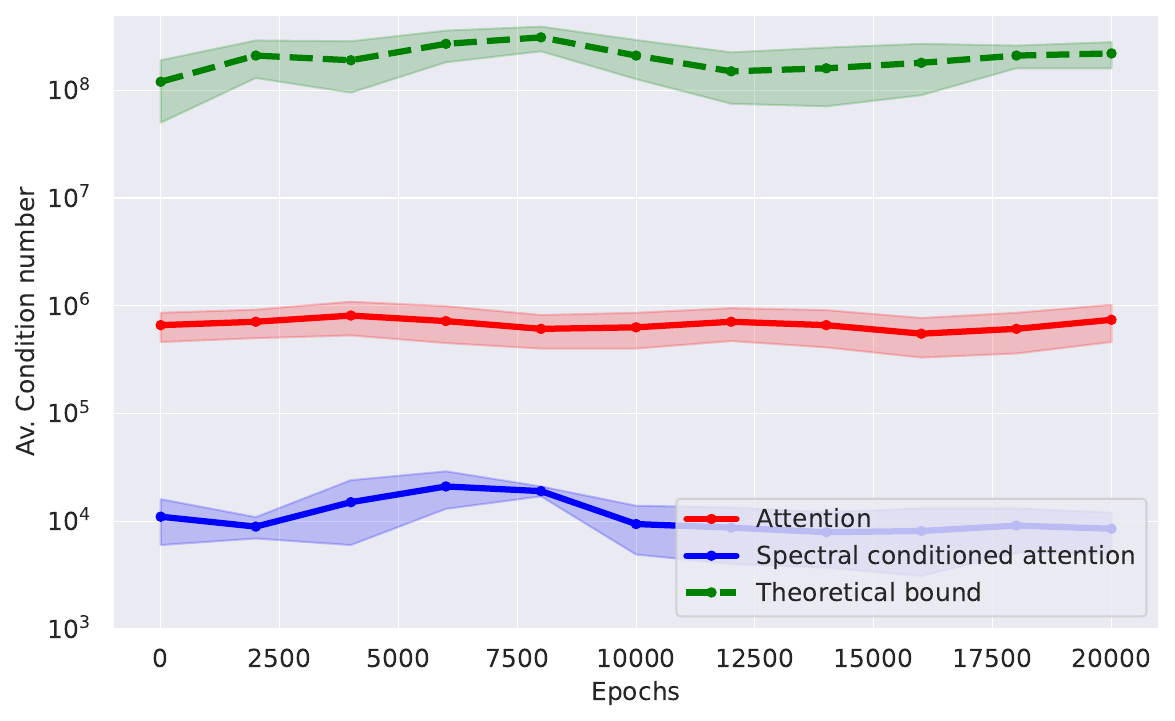}
    \caption{Average condition number of the self-attention Jacobian of a Nystr\"omformer on the text classification task over the course of training, before and after spectral conditioning, along with the theoretical bound from \cref{eqn:cond_jac}. We ran five trials with five different random seeds with plots showing the mean and standard deviations.}
    \label{fig:nyst_text_cond_std}
\end{figure}

\begin{figure}[ht!]
    \centering
    \includegraphics[width=1.\linewidth]
    {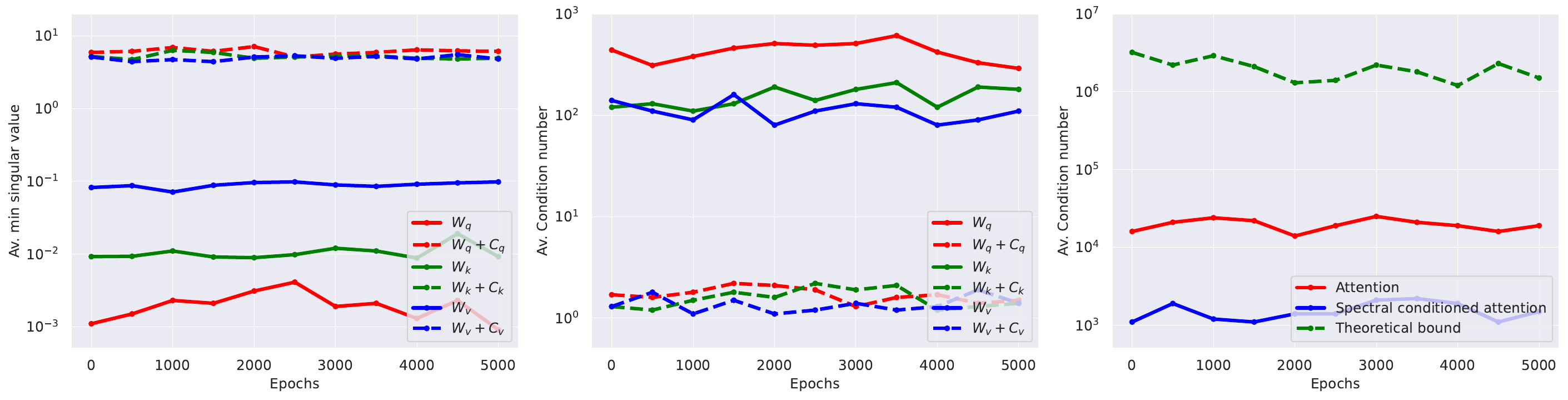}
    \caption{Analysis for Nystr\"{o}mformer on listops task. \textbf{Left:} Average minimum singular value of the query, key, and value projection matrices ($W_Q$, $W_K$, $W_V$) and their spectrally conditioned counterparts ($W_Q + C_Q$, $W_K + C_K$, $W_V + C_V$) throughout training. \textbf{Middle:} Condition numbers of $W_Q$, $W_K$, and $W_V$, and their spectrally conditioned forms during training. \textbf{Right:} Average condition number of the attention Jacobian over the course of training, before and after spectral conditioning, along with the theoretical bound from \cref{eqn:cond_jac}.}
    \label{fig:nyst_listops_analysis}
\end{figure}

\begin{figure}[ht!]
    \centering
    \includegraphics[width=1.\linewidth]
    {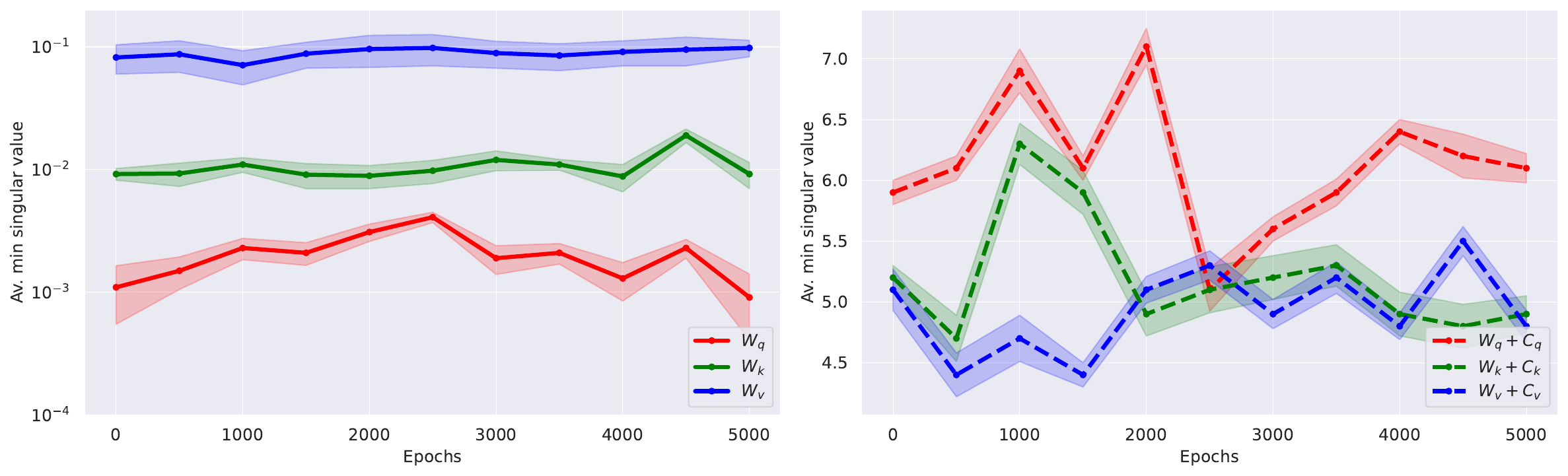}
    \caption{\textbf{Left:} Average minimum singular value of the query, key, and value projection matrices ($W_Q$, $W_K$, $W_V$) for a Nystr\"omformer on the listops task during training. We plot the mean over five trials and the standard deviation.
    \textbf{Right:} Average minimum singular value of the corrected query, key, and value projection matrices ($W_Q + C_Q$, $W_K + C_K$, $W_V + C_V$) for a Nystr\"omformer on the listops task during training. We plot the mean over five trials and the standard deviation.}
    \label{fig:nyst_listops_min_sing_std}
\end{figure}

\begin{figure}[ht!]
    \centering
    \includegraphics[width=1.\linewidth]
    {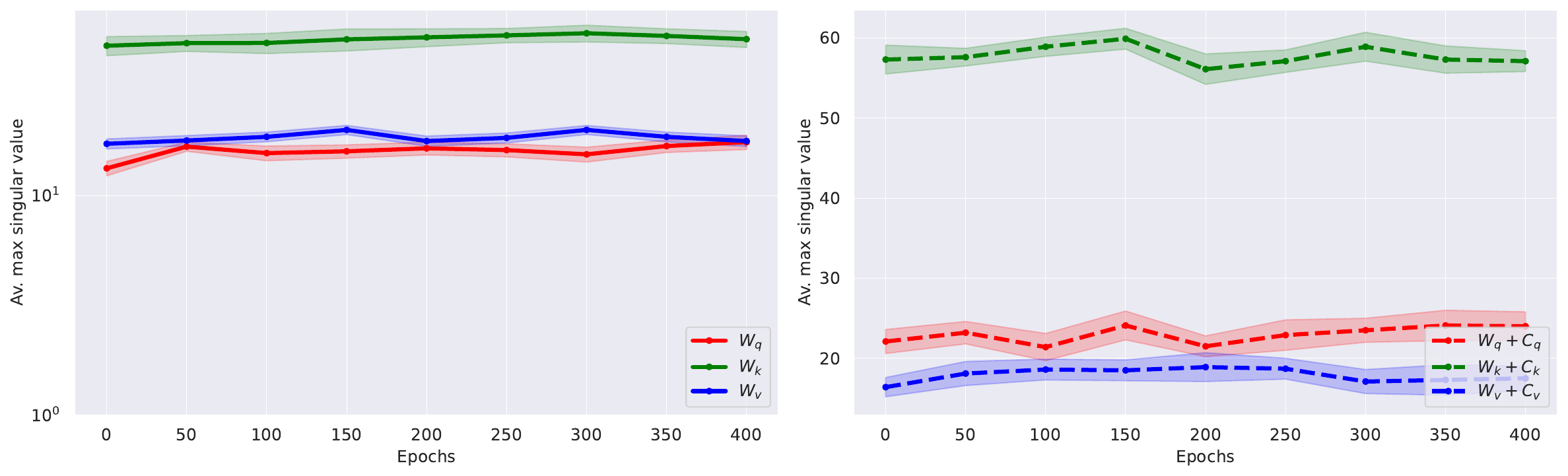}
    \caption{\textbf{Left:} Average maximum singular value of the query, key, and value projection matrices ($W_Q$, $W_K$, $W_V$) for a Nystr\"omformer on the listops task during training. We plot the mean over five trials and the standard deviation.
    \textbf{Right:} Average maximum singular value of the corrected query, key, and value projection matrices ($W_Q + C_Q$, $W_K + C_K$, $W_V + C_V$) for a Nystr\"omformer on the listops task during training. We plot the mean over five trials and the standard deviation.}
    \label{fig:nyst_listops_max_sing_std}
\end{figure}

\begin{figure}[ht!]
    \centering
    \includegraphics[width=1.\linewidth]
    {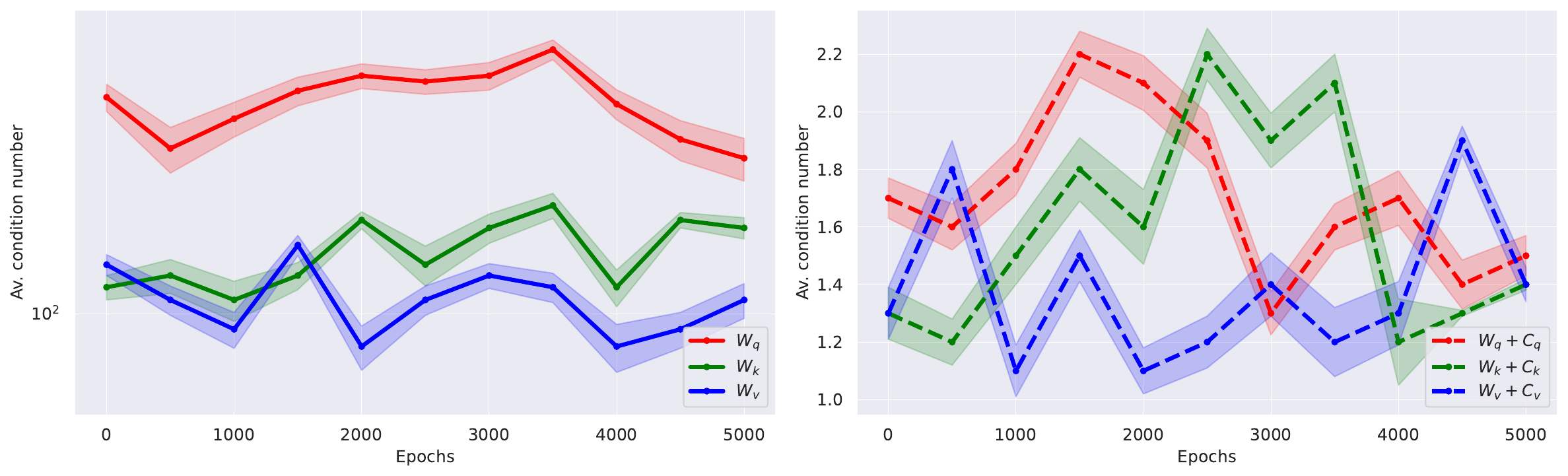}
    \caption{\textbf{Left:} Average condition number of the query, key, and value projection matrices ($W_Q$, $W_K$, $W_V$) for a Nystr\"omformer on the listops task during training. We plot the mean over five trials and the standard deviation.
    \textbf{Right:} Average condition number of the corrected query, key, and value projection matrices ($W_Q + C_Q$, $W_K + C_K$, $W_V + C_V$) for a Nystr\"omformer on the listops task during training. We plot the mean over five trials and the standard deviation.}
    \label{fig:nyst_listops_cond_query_std}
\end{figure}

\begin{figure}[ht!]
    \centering
    \includegraphics[width=0.7\linewidth]
    {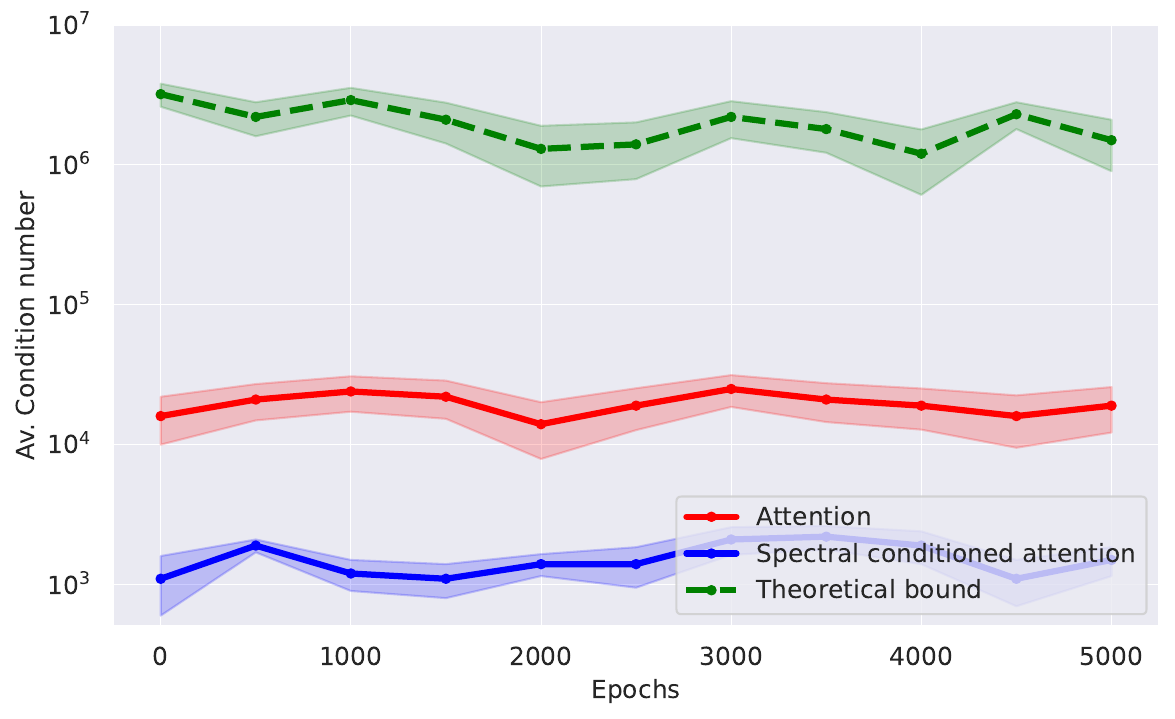}
    \caption{Average condition number of the self-attention Jacobian of a Nystr\"omformer on the listops task over the course of training, before and after spectral conditioning, along with the theoretical bound from \cref{eqn:cond_jac}. We ran five trials with five different random seeds with plots showing the mean and standard deviations.}
    \label{fig:nyst_listops_cond_std}
\end{figure}

\subsection{Language Modeling with Crammed BERT}\label{app:cram}

\paragraph{Hardware.} The language modeling experiment in \cref{subsec:lang} were all carried out on a Nvidia A6000 GPU. The Crammed-Bert was implemented following the original paper \cite{geiping2023cramming} and the original GitHub \cite{CrammingGit}. The training regime follows \cite{CrammingGit}.

\paragraph{Memory and Overheads:} The analysis carried out on FLOPS and memory in \cref{app:ic} apply in this case as once again the correction matrices $C_Q$, $C_K$ and $C_V$ are fixed throughout training and only added for forward passes.


\end{document}